\documentclass[letterpaper]{article} 
\usepackage[preprint]{aaai2027} 
\usepackage[hyphens]{url} 
\usepackage{graphicx} 
\usepackage{natbib} 
\usepackage{caption} 
\usepackage{amsmath,amssymb,amsthm}
\usepackage{algorithm}
\usepackage{algpseudocode}
\usepackage{booktabs}
\usepackage{multirow}

\makeatletter
\def\fps@figure{!t}
\def\fps@table{!t}
\makeatother

\newtheorem{proposition}{Proposition}
\newcommand{\method}{\textsc{ENOUGH}}
\newcommand{\stopa}{\textsc{Stop}}
\newcommand{\pos}[1]{\left[#1\right]_{+}}
\newcommand{\concat}{\mathbin{\Vert}}
\newcommand{\efficiencytablelabel}{tab:efficiency}

\title{Less Is Personal: Learning Minimal Sufficient User Profiles\\
for Personalized Language Models}
\author{
Minghang Liu$^{\spadesuit,\heartsuit}$,
Qiang Qiu$^{\spadesuit}$,
Yuanzhuo Wang$^{\spadesuit}$\\
Huawei Shen$^{\spadesuit}$,
Xueqi Cheng$^{\spadesuit}$
}
\affiliations{
\Large
$^{\spadesuit}$State Key Laboratory of AI Safety, Institute of Computing Technology, CAS\\
$^{\heartsuit}$University of Chinese Academy of Sciences, Beijing, China\\
\fontsize{10}{14}\selectfont
\{liuminghang23s, wangyuanzhuo, qiangqiu, shenhuawei\}@ict.ac.cn\\[6pt]
}

\begin{document}
\defcitealias{chen-etal-2025-icr}{Chen et al.\ 2025}
\maketitle

\begin{abstract}

Retrieval-augmented personalization enables large language models to produce more accurate and preference-aligned outputs using relevant records retrieved from user histories. Personalized language models typically prepend a fixed number of retrieved user records, even when additional history is redundant, harmful, or unrelated to a user's distinctive behavior. We study minimal sufficient personalization: constructing the least costly ordered profile for each input while preserving the utility achievable from a retrieved candidate pool. We introduce ENOUGH, a method that iteratively appends behavioral records or emits STOP to construct profiles with adaptive lengths. Offline, bounded counterfactual search evaluates profile prefixes by jointly considering downstream gains, user specificity, and token costs. The resulting long-horizon targets are distilled into a multi-head value controller with explicit ranking and stopping supervision. At inference, the controller selects and orders records through lightweight decisions, and the frozen generator is invoked once after stopping. Extensive experiments on six personalized tasks demonstrate that ENOUGH consistently outperforms strong heuristic and retrieval-augmented baselines in both effectiveness and efficiency, achieving minimal sufficient profiles that preserve personalization utility while reducing unnecessary context costs.

\end{abstract}

\section{Introduction}
\label{sec:introduction}

Personalized large language models (LLMs) adapt predictions and generations to a user's preferences, expertise, and behavioral patterns, supporting tasks from classification and recommendation to tailored text generation \citep{salemi-etal-2024-lamp,zhang-etal-2025-personalization-survey}. One approach encodes the user in model parameters or a static textual profile \citep{zhang-2024-guided,tan-etal-2024-democratizing}. Such persistent representations, however, can be costly to update or may compress away the fine-grained, request-dependent evidence in a growing interaction history. Retrieval-augmented personalized LLMs offer a flexible approach: for each request, they retrieve a set of the user's past behavioral records and place them in the prompt, allowing a frozen generator to adapt without per-user retraining \citep{salemi-etal-2024-lamp,xu-etal-2025-personalized}.

However, making history accessible does not determine how it should be used. A typical retrieval-augmented pipeline ranks records by semantic relevance and prepends a fixed top-$k$ profile. Recent methods improve this pipeline through generation feedback \citep{salemi-etal-2024-optimization}, attention-derived relevance scoring \citepalias{chen-etal-2025-icr}, or order-aware profile policies \citep{du-etal-2026-optimizing}, but the resulting profile is still commonly a fixed-size set or fixed-length permutation. This view leaves two questions unresolved: \emph{how much history should be used?} and \emph{which useful history is genuinely personal?}

\begin{figure}[!t]
    \centering
    \includegraphics[width=\columnwidth]{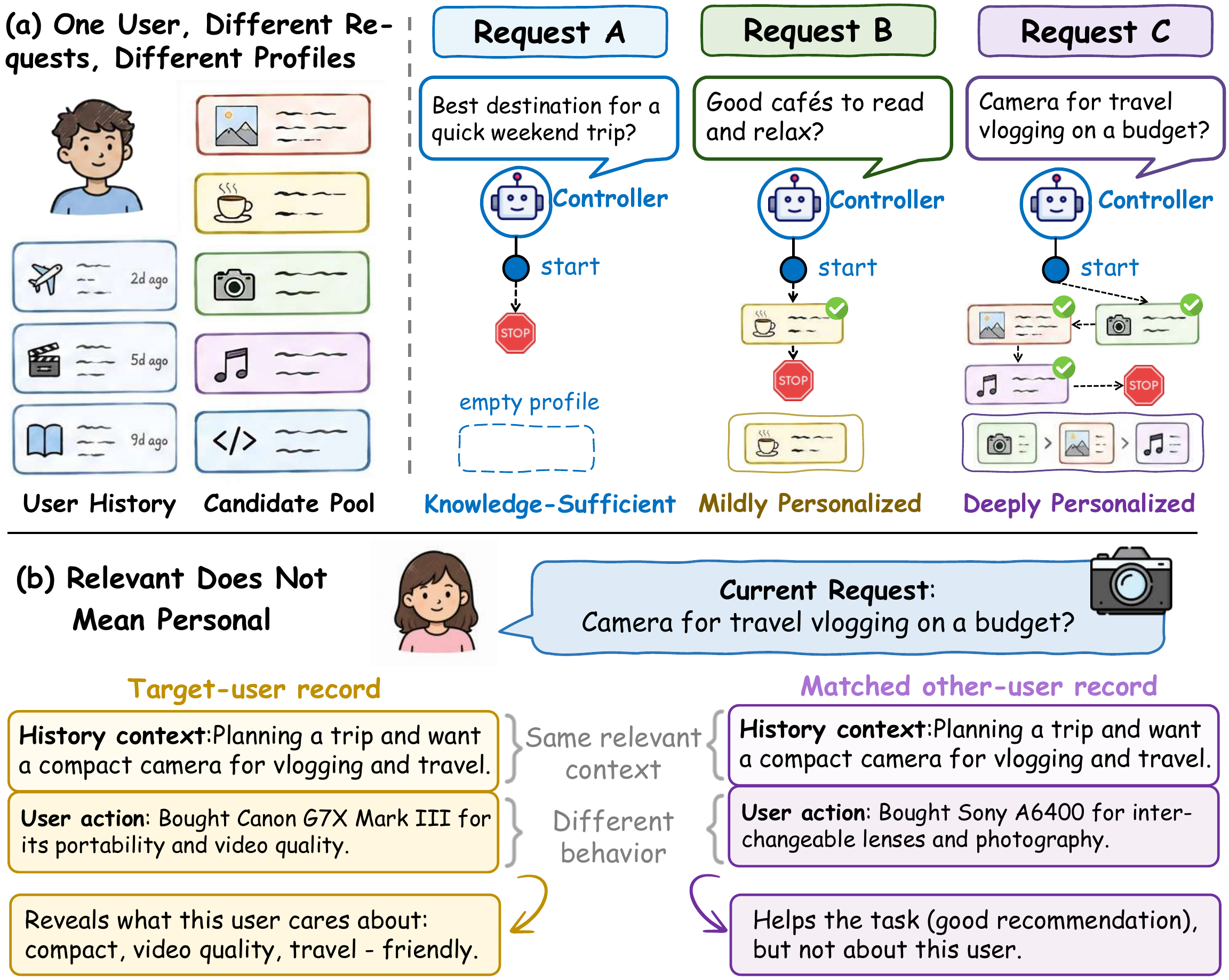}
    \caption{
    (a) A user's requests may require profiles of different lengths, including an empty one.
    (b) Query-relevant records can encode different user actions, so usefulness need not imply target-user specificity.}
    \label{fig:motivation}
\end{figure}

\textit{How much history should be used?}
The useful amount of history depends on both the current request and the records already selected. As illustrated in Figure~\ref{fig:motivation}(a), a request may be answerable without personal context, may need a single revealing behavior, or may require several selected records. A fixed profile length offers no mechanism to recognize when later records merely repeat an established preference, introduce conflicting signals, or dilute useful context. Personalized generation should thus be viewed as a selective, self-terminating evidence-acquisition process, not an always-on retrieval regime. Model should decide whether history is needed at all and identify when further history ceases.

\textit{Which useful history is genuinely personal? }Relevance, general task utility, and personalization answer different questions. Semantic similarity indicates whether a historical context concerns the current request, but not whether the associated action reveals the target user's behavior. Likewise, a record can improve generation simply by serving as a generally useful in-context example. Figure~\ref{fig:motivation}(b) illustrates this distinction: target-user and other-user records share the same relevant context, yet their actions imply different preferences. 
We operationalize personal value through two contrasts. The no-profile contrast asks whether the complete selected profile helps at all. For a beneficial profile, content-matched other-user replacements then ask whether a target-user record contributes more than a comparably relevant record while the remaining profile is held fixed. Thus, for our frozen generator, history is valued neither by semantic similarity nor by generic generation gain, but by its \emph{user-specific marginal value} under the current profile.

These two questions are inseparable. The marginal value of a candidate changes with the selected profile: a record that is useful initially can become redundant after a similar behavior has been included, while records may interact or complement one another. Consequently, history length should not be a hyperparameter chosen independently of record value. It should emerge from the same conditional decision process: continue while some candidate offers future personalized value, and stop when the current profile is sufficient. We call this objective \emph{minimal sufficient personalization}: within a retrieved candidate pool, seek the least costly ordered profile that approaches its best attainable personalized utility.

We propose Efficient Net-utility Optimization for User-profile Growth and Halting (\method{}), a method that targets this objective by either appending a behavioral record or emitting \stopa{}.
Starting from the empty profile, \method{} can reject profile augmentation at the root, choose and order interacting records, and terminate at any feasible length. 
Offline, bounded counterfactual search evaluates profile prefixes through three quantities: downstream gain over no personalization, beneficial user specificity measured against content-matched other user replacements, and actual incremental prompt-token cost. 
Long-horizon backups then supervise a multi-head value controller, so selection and stopping account for future record interactions rather than only immediate gains. 
At inference, the controller constructs the profile through lightweight decisions, and the frozen generator is invoked once after stopping.
Our contributions are threefold:
\begin{itemize}
    \item We formulate retrieval-augmented personalization as minimal sufficient profile construction, unifying whether to personalize, which behavioral records to select, and when additional history is no longer worthwhile.
    \item We introduce a dual counterfactual utility that separates general downstream gain from user-specific marginal value, and a long-horizon sequential controller with an explicit \stopa{} action. 
    \item Our evaluation spans six tasks across a broad range of personalization settings, including classification, regression, and text generation, together with diagnostics of stopping, specificity, robustness, and efficiency.
\end{itemize}

\section{Methodology}
\label{sec:method}




As shown in Figure~\ref{fig:method-overview}, \method{} constructs an input-adaptive ordered user profile that targets minimal sufficiency from the top-$M$ candidates of a frozen retriever. Starting from the empty profile, a lightweight controller sequentially appends a behavioral record consisting of its historical context and the user's action or selects \stopa{}. Offline, bounded counterfactual search evaluates profiles using downstream gain, target-user specificity, and token cost, then distills long-horizon action values into the controller. At test time, construction uses neither references nor generator rollouts; the frozen generator is invoked once after stopping.

\begin{figure*}[!t]
    \centering
    \includegraphics[width=\textwidth]{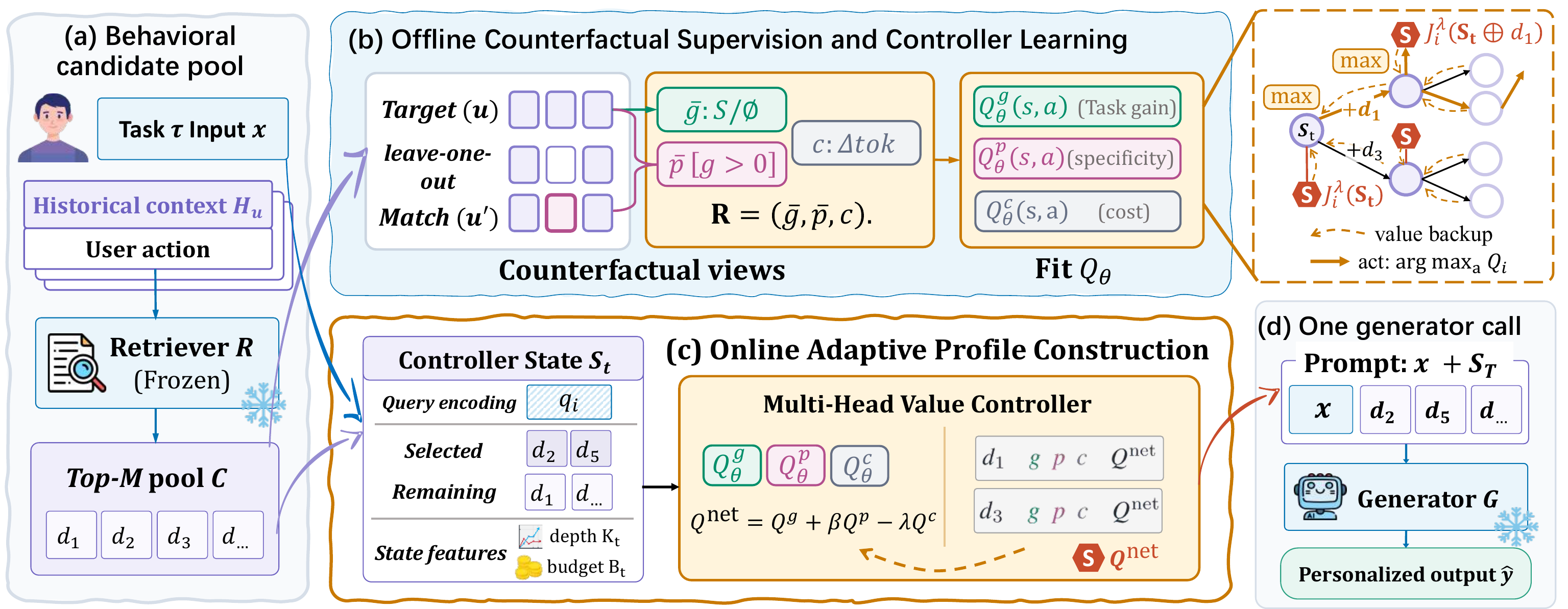}
    \vspace{-1.5em}
    \caption{Overview of the ENOUGH framework.
    (a) A frozen retriever forms a behavioral candidate pool.
    (b) Offline counterfactual views measure downstream gain, matched-replacement specificity, and prompt cost; value backup supplies aligned long-horizon targets for the controller. 
    (c) At test time, the controller adaptively scores feasible record additions and \stopa{} by net value.
    (d) Once construction stops, the ordered profile and current input are passed to the frozen generator in one call.
    }
    \label{fig:method-overview}
    \vspace{-1.5em}
\end{figure*}

\subsection{Problem Formulation}
\label{sec:problem}

Let the training set be
\begin{equation}
\small
    \mathcal D =
    \{(u_i,\tau_i,x_i,y_i,H_{u_i})\}_{i=1}^{N},
\end{equation}
where $u_i$ denotes a profile owner, $\tau_i$ its personalization task, $x_i$ the current input, $y_i$ its reference output, and $H_{u_i}$ the  history. Each history item is $d=(c_d,a_d,m_d)$, with $c_d$ is the historical context, $a_d$ is an observed user action (e.g., a label, score, or generated text), and $m_d$ contains optional metadata such as timestamps. A task-specific serializer $\phi_{\tau}$ maps each record into a common textual schema.

A frozen retriever $R$ constructs a candidate pool
\begin{equation}
\small
    C_i=R(x_i,H_{u_i};M), \qquad |C_i|\leq M,
\end{equation}
where $M$ is the candidate-pool size limit and we set $    K_i=\min(K_{\max},|C_i|)$.

We consider the space $\mathfrak S_i$ of ordered, non-repeating sequences from $C_i$ whose length is at most $K_i$ and whose rendered prompt satisfies a hard context limit $L_{\max}$. At step $t$ the controller holds $\mathbf S_t=(d_1,\dots,d_t)$ and chooses
\begin{equation}
\small
    a_t\in\mathcal A(s_t)
    =\{d\in C_i\setminus\mathbf S_t:
      \mathbf S_t\oplus d\in\mathfrak S_i\}
      \cup\{\stopa\}.
\end{equation}

Selecting $d$ appends it to the ordered profile; selecting \stopa{} terminates construction. The final output is
\begin{equation}
\small
    \widehat y_i=G\!\left(P_{\tau_i}(x_i,\mathbf S_T)\right),
\end{equation}
where $G$ is the frozen generator and $P_{\tau}$ is prompt template.

\paragraph{Minimal sufficient personalization.}
Let $U_i(\mathbf S)$ denote the cost-free personalized utility defined in Section~\ref{sec:dual-value}, and let
\begin{equation}
\small
    U_i^{\star}=\max_{\mathbf S\in\mathfrak S_i}U_i(\mathbf S).
\end{equation}
An $\epsilon$-minimal sufficient profile is
\begin{equation}
\small
    \mathbf S_i^{\mathrm{MSP}}
    \in
    \arg\min_{\mathbf S\in\mathfrak S_i} c_i(\mathbf S)
    \quad \text{s.t.}\quad
    U_i(\mathbf S)\geq U_i^{\star}-\epsilon.
    \label{eq:msp}
\end{equation}

This definition separates sufficiency from minimality.
Because the ordered search space is discrete and non-convex, we optimize the cost-regularized surrogate: 
\begin{equation}
\small
    J_i^{\lambda}(\mathbf S)
    =U_i(\mathbf S)-\lambda c_i(\mathbf S).
    \label{eq:net-objective}
\end{equation}
Different values of $\lambda$ recover supported points on the empirical quality--cost frontier.

We measure the actual incremental prompt cost:
\begin{equation}
\small
    c_i(\mathbf S)=
    \frac{
      \operatorname{Tok}(P_{\tau_i}(x_i,\mathbf S))
      -\operatorname{Tok}(P_{\tau_i}(x_i,\varnothing))
    }{L_{\mathrm{ref}}},
    \label{eq:cost}
\end{equation}
where $L_{\mathrm{ref}}$ is a profile reference budget shared by all tasks. 


\subsection{Behavior-Aware Profile Representation}
\label{sec:representation}

Personalization histories are observations of user behavior. The controller therefore encodes both the situation and the user's response. For a candidate record $d_j$, we compute
\begingroup
\fontsize{8.5pt}{10pt}\selectfont
\begin{equation}
    \begin{aligned}
    e_j=E_h\!\big(&[\mathrm{TASK}=\tau_i]\concat
      [\mathrm{HIST\_CONTEXT}]\concat c_{d_j}\\
      &{}\concat[\mathrm{USER\_ACTION}]\concat a_{d_j} \concat[\mathrm{META}]\concat m_{d_j}\big).
    \end{aligned}
    \label{eq:record-encoder}
\end{equation}
\endgroup

The current request is encoded as $q_i=E_q([\mathrm{TASK}=\tau_i]\concat[\mathrm{QUERY}]\concat x_i)$. 

The controller state induced by the ordered profile $\mathbf S_t$ is
\begin{equation}
    s_t=(q_i,z_t,r_t,b_t),
\end{equation}
where $z_t=E_S(e_{d_1},\ldots,e_{d_t})$ is an order-sensitive incremental representation of the selected profile. $r_t=\operatorname{AttnPool}([q_i;z_t],\{e_j:d_j\in C_i\setminus\mathbf S_t\})$ is a permutation-invariant summary of the remaining candidates. $b_t$ contains the current incremental token cost, remaining horizon, and remaining context budget. In the multi-task setting, the encoders and value controller are shared across personalization tasks, while the task embedding, serializer, and prompt template are task-conditioned.

\subsection{Dual Counterfactual Profile Utility}
\label{sec:dual-value}

\subsubsection{Downstream task gain}

For a user profile $\mathbf S$, we compute the length-normalized log-likelihood of the reference output:
\begin{equation}
    \ell_i(\mathbf S)
    =\frac{1}{|y_i|}
      \sum_{r=1}^{|y_i|}
      \log p_G\!\left(
        y_{i,r}\mid y_{i,<r},P_{\tau_i}(x_i,\mathbf S)
      \right).
    \label{eq:likelihood}
\end{equation}
The downstream gain over no profile is
\begin{equation}
\small
    g_i(\mathbf S)=\ell_i(\mathbf S)-\ell_i(\varnothing).
    \label{eq:task-gain}
\end{equation}

To prevent high variance tasks from dominating shared training, we use a scale-only normalization. On a fixed calibration subset of the initial offline branches, we compute
\begin{equation}
    s_{\tau}^{g}
    =\sqrt{\mathbb E_{i:\tau_i=\tau}[g_i(\mathbf S)^2]},
    \qquad
    \overline g_i(\mathbf S)
    =\frac{g_i(\mathbf S)}{s_{\tau_i}^{g}+\varepsilon}.
    \label{eq:g-normalization}
\end{equation}
We do not subtract a mean, preserving $\overline g_i(\varnothing)=0$.

\subsubsection{User-specific gain over matched replacements}

A profile can improve the likelihood simply because it contains a useful in-context demonstration. To isolate user-specific usefulness, we intervene on one profile slot while holding the rest of the selected profile fixed.

For every candidate $d$ owned by $u_i$, we precompute $K^{-}$ matched replacements $m_k(d;u_i)$ from other profile owners. Matching uses only variables available before observing the user action: task identity, historical context semantics, relevance to the current query, length, and time when present.

For a profile $\mathbf S=(d_1,\ldots,d_t)$ and position $r$, let $\mathbf S_{-r}$ be the sequence with $d_r$ removed while preserving the order of all other records. 
For each slot, the target-user and matched-control marginal gains are then
\begin{align}
\small
    \Delta^u_{i,r}(\mathbf S)
    &=\ell_i(\mathbf S)-\ell_i(\mathbf S_{-r}),\\
    \Delta^{c,k}_{i,r}(\mathbf S)
    &=\ell_i(\mathbf S_{-r}\oplus m_k(d_r;u_i))
      -\ell_i(\mathbf S_{-r}).
\end{align}

The slot-level beneficial specificity is
\begin{equation}
\small
    \rho_{i,r}(\mathbf S)
    =\pos{\Delta^u_{i,r}(\mathbf S)}
     -\frac{1}{K^{-}}
      \sum_{k=1}^{K^{-}}\pos{\Delta^{c,k}_{i,r}(\mathbf S)}.
    \label{eq:slot-specificity}
\end{equation}
We credit specificity only when the complete target-user profile improves over the empty profile:
\begin{equation}
\small
    p_i(\mathbf S)
    =\mathbb{I}[g_i(\mathbf S)>0]\,
      \frac{1}{t}\sum_{r=1}^{t}\rho_{i,r}(\mathbf S),
    \qquad p_i(\varnothing)=0.
    \label{eq:profile-specificity}
\end{equation}
This prevents a profile that is harmful relative to no personalization from receiving a positive specificity bonus merely because the matched controls are even worse. A single uniformly sampled position per profile yields an unbiased Monte Carlo estimator of Eq.~\eqref{eq:profile-specificity}.

As with task gain, we compute a scale on the fixed calibration subset and freeze it:
\begin{equation}
\small
    s_{\tau}^{p}
    =\sqrt{\mathbb E_{i:\tau_i=\tau}[p_i(\mathbf S)^2]},
    \qquad
    \overline p_i(\mathbf S)
    =\frac{p_i(\mathbf S)}{s_{\tau_i}^{p}+\varepsilon}.
\end{equation}
The cost-free personalized utility objective is
\begin{align}
\small
    U_i(\mathbf S)
      &=\overline g_i(\mathbf S)+\beta\overline p_i(\mathbf S).
      \label{eq:utility}
\end{align}
Here, $\overline g_i(\mathbf S)$ rewards profiles that improve task likelihood over the empty profile, while $\overline p_i(\mathbf S)$ measures the additional benefit of target-user records over matched records from other users, discounting generic few-shot effects. The coefficients $\beta$ and $\lambda$ balance user specificity and prompt cost, respectively.


\subsection{Long-Horizon Decision Principle}
\label{sec:optimal-stopping}

For a candidate pool, context budget and maximum depth, the theoretical optimal value obeys the finite-horizon recursion
\begin{equation}
\small
    V_i^{\star}(\mathbf S)
    =\max\left\{
      J_i^{\lambda}(\mathbf S),
      \max_{d\in\mathcal A(s)\setminus\{\stopa\}}
      V_i^{\star}(\mathbf S\oplus d)
    \right\}.
    \label{eq:bellman}
\end{equation}
Similarly,
\begin{equation}
\small
    Q_i^{\star}(s,\stopa)=J_i^{\lambda}(\mathbf S), 
    Q_i^{\star}(s,d)=V_i^{\star}(\mathbf S\oplus d).
\end{equation}
Equation~\eqref{eq:bellman} is a Bellman characterization, not a claim that the learned controller recovers the full combinatorial optimum. A detailed proof is provided in Appendix~\ref{app:proof-exact-value-stopping}.

\begin{proposition}[Exact-value stopping]
\label{prop:exact-value-stopping}
On the finite deterministic candidate tree, if $Q_i^{\star}(s,a)$ is known for every reachable state and action, selecting an action with maximum $Q_i^{\star}$ at each step, with ties resolved in favor of \stopa{}, returns a terminal profile that maximizes $J_i^{\lambda}$ within the candidate tree.
\end{proposition}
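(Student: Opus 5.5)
The plan is a backward induction on the finite deterministic candidate tree. First I will make the recursion in Eq.~\eqref{eq:bellman} well defined and identify $V_i^{\star}$ with the best achievable terminal value. Then I will show that greedy selection with respect to $Q_i^{\star}$ keeps $V_i^{\star}$ invariant along the path it generates.

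\emph{Step 1: define the tree and its height.}
The tree is finite because profiles are non-repeating sequences from $C_i$ of length at most $K_i$. Every node $\mathbf S\in\mathfrak S_i$ is a valid terminal profile, reached by emitting \stopa{} there, and only feasible extensions are children. For each node I define
\[
W(\mathbf S)=\max\bigl\{J_i^{\lambda}(\mathbf S') : \mathbf S'\in\mathfrak S_i \text{ has } \mathbf S \text{ as a prefix}\bigr\},
\]
the best terminal value reachable from $\mathbf S$. I will show $W=V_i^{\star}$ by induction on the remaining height $h(\mathbf S)$, the length of the longest feasible extension path.

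\emph{Step 2: show $W=V_i^{\star}$.}
For a leaf ($h=0$), only \stopa{} is available, so $V_i^{\star}(\mathbf S)=J_i^{\lambda}(\mathbf S)=W(\mathbf S)$. For $h>0$, the set of profiles having $\mathbf S$ as a prefix splits disjointly into $\{\mathbf S\}$ and, for each feasible child $d$, the profiles having $\mathbf S\oplus d$ as a prefix. Hence
\[
W(\mathbf S)=\max\bigl\{J_i^{\lambda}(\mathbf S),\max_d W(\mathbf S\oplus d)\bigr\}.
\]
By the induction hypothesis this equals the right-hand side of Eq.~\eqref{eq:bellman}. In particular, $V_i^{\star}(\varnothing)=\max_{\mathbf S\in\mathfrak S_i}J_i^{\lambda}(\mathbf S)$, because every profile in $\mathfrak S_i$ extends the empty profile; this includes the root \stopa{} option. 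It also follows that $V_i^{\star}(\mathbf S)=\max_a Q_i^{\star}(s,a)$.

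\emph{Step 3: greedy selection preserves $V_i^{\star}$.}
Let the greedy path be $\mathbf S_0=\varnothing,\mathbf S_1,\dots$. If the greedy action at $\mathbf S_t$ is some $d$, then $V_i^{\star}(\mathbf S_{t+1})=Q_i^{\star}(s_t,d)=\max_a Q_i^{\star}(s_t,a)=V_i^{\star}(\mathbf S_t)$. The path strictly lengthens at each step and the depth is bounded by $K_i$, so it terminates by choosing \stopa{} at some $\mathbf S_T$; at a leaf \stopa{} is the only option. At that point
\[
J_i^{\lambda}(\mathbf S_T)=Q_i^{\star}(s_T,\stopa)=V_i^{\star}(\mathbf S_T)=V_i^{\star}(\varnothing)=\max_{\mathfrak S_i}J_i^{\lambda},
\]
which proves optimality.

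The tie-breaking rule is not needed for optimality. Its role is that whenever stopping already attains the optimal value, the procedure does not append records, so it returns the shortest optimal profile along the greedy path. I will note this as a remark rather than claim global length-minimality, since equal-value deeper branches elsewhere in the tree could be shorter.

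\emph{Main obstacle.}
The part needing the most care is Step 2: making sure the disjoint decomposition is exact. Concretely, feasibility must be prefix-closed, meaning every prefix of a sequence in $\mathfrak S_i$ lies in $\mathfrak S_i$. I expect this to hold because the token count is monotone in appended records and the length cap is monotone, so truncating a feasible profile keeps it within $L_{\max}$ and $K_i$. I will state this monotonicity explicitly as an assumption on the serializer and template. It then guarantees that $\mathcal A(s)$ enumerates exactly the children through which all descendants are reached, with no descendant missed or counted under two different children.
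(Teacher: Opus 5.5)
Your proposal is correct and follows essentially the same route as the paper: induction on the remaining height $h(\mathbf S)$ identifies $V_i^{\star}(\mathbf S)$ with the best terminal value among descendants, and each greedy record action then preserves $V_i^{\star}$ until \stopa{} is chosen with $J_i^{\lambda}(\mathbf S_T)=V_i^{\star}(\varnothing)$. The one difference is that the paper defines the descendant set directly as the terminal prefixes reachable through feasible actions, so it never needs the prefix-closedness (token-monotonicity) assumption you flag; that assumption matters only for your slightly stronger claim of optimality over all of $\mathfrak S_i$, not for optimality within the candidate tree as the proposition states.
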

\noindent

\subsection{Offline Counterfactual Supervision and Controller Learning}
\label{sec:offline}
\label{sec:controller}
\label{sec:learning}

The complete ordered candidate tree is intractable. We construct a bounded prefix tree $\mathcal T_i$ using offline counterfactual search and perform exact backups only within that tree. The resulting target $\widehat Q_{\mathcal T_i}$ is exact for the sampled tree but is only an approximation to $Q_i^{\star}$.

\paragraph{Bounded-tree construction.}
We first sample tree roots from the empty profile, prefixes of the frozen retriever, recency and diversity policies, and random subsets. Profile sizes are stratified from zero to $K_i$ so that the controller observes both early and late stopping states.

At each sampled root, \stopa{} is always evaluated, and all feasible remaining candidate actions are expanded. At deeper states, a fixed action budget $J$ combines high retriever-score, diversity, and uniformly sampled actions. Search retains at most $W$ prefixes per depth. Each intermediate prefix is a legal terminal node. Unique ordered prefixes are represented once in directed acyclic graph structured cache, and all teacher-forced generator evaluations are batched.

\paragraph{Long-horizon target backup.}
For an expanded state-action pair $(s,a)$, let $\mathcal L_{\mathcal T}(s,a)$ denote the terminal nodes in the sampled subtree reached after taking $a$. We select one terminal node using the same composite objective for all three value components:
\begin{equation}
\small
    L^{\star}_{\mathcal T}(s,a)
    =\arg\max_{L\in\mathcal L_{\mathcal T}(s,a)}
      J_i^{\lambda}(L).
    \label{eq:best-leaf}
\end{equation}
The vector-valued target is then $\mathbf{R}_i(s,a)=\left(R_i^g,R_i^p,R_i^c\right)$.

For \stopa{}, the leaf is the current profile. Thus, the three components always describe the same optimal continuation within the sampled tree. Define the corresponding scalar target as
$    R_i^{\mathrm{net}}(s,a)=R_i^{g}(s,a)+\beta R_i^{p}(s,a)-\lambda R_i^{c}(s,a).
$

\paragraph{Multi-head value controller.}

For every available action, the controller predicts three outcome components of the net-optimal sampled continuation: \(Q_{\theta}^{g}(s,a)\), \(Q_{\theta}^{p}(s,a)\), and \(Q_{\theta}^{c}(s,a)\). These quantities estimate normalized task gain, normalized matched-replacement specificity, and normalized prompt cost, respectively. The composite score is
\begin{equation}
\small
    Q_{\theta}^{\mathrm{net}}(s,a)
    =Q_{\theta}^{g}(s,a)
     +\beta Q_{\theta}^{p}(s,a)
     -\lambda Q_{\theta}^{c}(s,a).
    \label{eq:net-q}
\end{equation}
For any state with at least one feasible record action, define the predicted stop margin
\begin{equation}
\small
    \Delta_{\theta}^{\mathrm{stop}}(s)
    =Q_{\theta}^{\mathrm{net}}(s,\stopa{})
     -\max_{d\in\mathcal A(s)\setminus\{\stopa{}\}}
      Q_{\theta}^{\mathrm{net}}(s,d).
    \label{eq:stop-margin}
\end{equation}

\paragraph{Controller optimization.}

Let $\mathcal A_{\mathrm{lab}}(s)$ be the action subset expanded and labeled in the bounded tree. For each labeled state-action pair, we regress the three outcome components:
\begin{equation}
\small
    \mathcal L_{\mathrm{value}}
    =\mathbb E_{(s,a)}
      \sum_{k\in\{g,p,c\}}w_k
      \operatorname{Huber}\!\left(
        Q_{\theta}^{k}(s,a),R_i^{k}(s,a)
      \right).
    \label{eq:value-loss}
\end{equation}

In addition to pointwise value regression, we encourage the controller to preserve the relative preferences among the labeled actions at each state. The bounded-tree target and controller distributions over the same action subset are
\begin{align}
\small
    p^{\star}(a\mid s)
    &=\operatorname{softmax}_{a\in\mathcal A_{\mathrm{lab}}(s)}
      \left(R_i^{\mathrm{net}}(s,a)/T_r\right),\\
    p_{\theta}(a\mid s)
    &=\operatorname{softmax}_{a\in\mathcal A_{\mathrm{lab}}(s)}
      \left(Q_{\theta}^{\mathrm{net}}(s,a)/T_r\right).
\end{align}
The resulting listwise ranking loss is
\begin{equation}
\small
    \mathcal L_{\mathrm{rank}}
    =\mathbb E_s
        \left[
        D_{\mathrm{KL}}\!\left(
        p^{\star}(\cdot\mid s)
        \,\Vert\,
        p_\theta(\cdot\mid s)
        \right)
        \right].
    \label{eq:rank-loss}
\end{equation}


The stop-decision loss is applied only to fully expanded states with at least one record action, for which the labeled action mask contains every feasible action. This restriction prevents an unexpanded beneficial action from creating a false positive \stopa{} label. For such a state, define the bounded-tree teacher margin as
\begin{equation}
\small
    \Delta_{\mathcal T_i}^{\mathrm{stop}}(s)
    =R_i^{\mathrm{net}}(s,\stopa{})
      -\max_{d\in\mathcal A(s)\setminus\{\stopa{}\}}
       R_i^{\mathrm{net}}(s,d).
\end{equation}
Using the soft target $y_{\mathrm{stop}}=\sigma(\Delta_{\mathcal T_i}^{\mathrm{stop}}/T_s)$, we set
\begin{equation}
\small
    \mathcal L_{\mathrm{stop}}
    =\operatorname{BCE}\!\left(
      \sigma(\Delta_{\theta}^{\mathrm{stop}}/T_s),y_{\mathrm{stop}}
    \right).
    \label{eq:stop-loss}
\end{equation}
The complete objective is
\begin{equation}
    \mathcal L
    =\mathcal L_{\mathrm{value}}
     +\eta\mathcal L_{\mathrm{rank}}
     +\zeta\mathcal L_{\mathrm{stop}}.
    \label{eq:total-loss}
\end{equation}
Mini-batches are task-balanced by uniform task sampling.

\subsection{Online Adaptive Profile Construction}
\label{sec:online}
\label{sec:algorithms}

After offline training, only the learned controller parameters cross the offline--online boundary. Matched replacements, reference likelihoods, and counterfactual search are discarded. 
The deployment retains only $R$, the controller (with its training-time $\beta,\lambda$), and the frozen generator $G$.

The learned policy replaces the unavailable oracle value with the predicted composite score:
\begin{equation}
\small
    \pi_{\theta}(s)
    =\arg\max_{a\in\mathcal A(s)}Q_{\theta}^{\mathrm{net}}(s,a),
    \label{eq:policy}
\end{equation}
with ties resolved in favor of \stopa{}. When at least one record action remains feasible, the controller stops if and only if the predicted margin $\Delta_{\theta}^{\mathrm{stop}}(s)$ in Eq.~\eqref{eq:stop-margin} is non-negative. A \stopa{} decision at the root yields no profile augmentation, although candidate retrieval has still been performed.

\providecommand{\PHMainWins}{6 of 6}
\providecommand{\PHReplicaWins}{4 of 6}
\providecommand{\PHMainUWins}{5 of 6}
\providecommand{\PHReplicaUWins}{5 of 6}
\providecommand{\PHMainWinTasks}{LaMP-1, LaMP-2, LaMP-3, LaMP-4, LaMP-5, and LaMP-7}
\providecommand{\PHMainNonWinTasks}{none}
\providecommand{\PHReplicaWinTasks}{LaMP-1, LaMP-3, LaMP-4, and LaMP-5}
\providecommand{\PHReplicaNonWinTasks}{LaMP-2 and LaMP-7}
\providecommand{\PHMainUWinTasks}{LaMP-1, LaMP-2, LaMP-3, LaMP-4, and LaMP-7}
\providecommand{\PHMainUNonWinTasks}{LaMP-5}
\providecommand{\PHReplicaUWinTasks}{LaMP-1, LaMP-2, LaMP-3, LaMP-4, and LaMP-7}
\providecommand{\PHReplicaUNonWinTasks}{LaMP-5}
\providecommand{\PHMainFirstTaskFraction}{6 of 6}
\providecommand{\PHMainFirstTasks}{LaMP-1, LaMP-2, LaMP-3, LaMP-4, LaMP-5, and LaMP-7}
\providecommand{\PHMainNonFirstTasks}{none}
\providecommand{\PHMainAverageRank}{1.00}
\providecommand{\PHMainWinMarginMin}{0.003}
\providecommand{\PHMainWinMarginMax}{0.029}
\providecommand{\PHMainLossMarginMin}{0.000}
\providecommand{\PHMainLossMarginMax}{0.000}
\providecommand{\PHLiPTokens}{523}
\providecommand{\PHPURPLETokens}{899}
\providecommand{\PHContrieverTokens}{920}
\providecommand{\PHTokenSaving}{41.8\%}
\providecommand{\PHRootStop}{15.9\%}
\providecommand{\PHAvgRecords}{3.11}
\providecommand{\PHHarmRate}{15.8\%}
\providecommand{\PHOracleEmpty}{17.7\%}
\providecommand{\PHOracleShort}{56.3\%}
\providecommand{\PHFixedFiveHarm}{23.2\%}
\providecommand{\PHFixedTenHarm}{31.7\%}
\providecommand{\PHFixedPeakK}{5}
\providecommand{\PHFixedPeakQuality}{98.7}
\providecommand{\PHFixedPeakToTenQualityGap}{0.8}
\providecommand{\PHFrontierMaxTokens}{733}
\providecommand{\PHFrontierMinTokens}{279}
\providecommand{\PHFrontierMatchedTokens}{510}
\providecommand{\PHFrontierLiPQuality}{99.3}
\providecommand{\PHFrontierMatchedPURPLEQuality}{98.5}
\providecommand{\PHFrontierReferenceQuality}{98.7}
\providecommand{\PHFrontierLiPTokensAtReference}{405}
\providecommand{\PHFrontierFixedTokens}{907}
\providecommand{\PHContinueOneBenefit}{18.0\%}
\providecommand{\PHContinueOneRedundant}{36.2\%}
\providecommand{\PHContinueOneNotWorth}{28.9\%}
\providecommand{\PHContinueOneHarm}{16.9\%}
\providecommand{\PHContinueThreeBenefit}{9.9\%}
\providecommand{\PHContinueThreeHarm}{35.9\%}
\providecommand{\PHOracleContinueOneBenefit}{28.2\%}
\providecommand{\PHStopRegret}{0.076}
\providecommand{\PHPrematureStop}{8.8\%}
\providecommand{\PHLateStop}{10.7\%}
\providecommand{\PHForcedKMax}{3.0\%}
\providecommand{\PHStopBrier}{0.116}
\providecommand{\PHStopECE}{0.028}
\providecommand{\PHStopAUROC}{0.839}
\providecommand{\PHStopAUPRC}{0.794}
\providecommand{\PHStopCalibrationPairs}{0.10/0.09, 0.30/0.26, 0.50/0.49, 0.70/0.73, 0.90/0.88}
\providecommand{\PHSpecificityTargetQuality}{100.0}
\providecommand{\PHSpecificityMatchedQuality}{99.1}
\providecommand{\PHSpecificityTargetLogLik}{0.041}
\providecommand{\PHSpecificityMatchedLogLik}{0.024}
\providecommand{\PHMatchCoverage}{87.5\%}
\providecommand{\PHMatchMeanSMD}{0.040}
\providecommand{\PHHumanFaithfulnessWinRange}{53.7--56.0\%}
\providecommand{\PHHumanStyleWinRange}{60.3--61.3\%}
\providecommand{\PHHumanOverallWinRange}{58.7\%}
\providecommand{\PHHumanKappaRange}{0.41--0.47}
\providecommand{\PHAblationFullQuality}{99.9}
\providecommand{\PHAblationMyopicQuality}{99.1}
\providecommand{\PHAblationLambdaZeroTokenIncrease}{44.8\%}
\providecommand{\PHAblationFullHarm}{15.0\%}
\providecommand{\PHAblationNoHarmGateHarm}{19.8\%}
\providecommand{\PHOracleLiPRegret}{0.069}
\providecommand{\PHOracleLiPSufficiency}{81.8\%}
\providecommand{\PHOracleLiPExcessTokens}{89}
\providecommand{\PHOracleLiPStopAgreement}{83.7\%}
\providecommand{\PHRobustDuplicateLengthDelta}{0.07}
\providecommand{\PHRobustUsefulHarm}{12.0\%}
\providecommand{\PHRobustConflictHarm}{19.0\%}
\providecommand{\PHRobustConflictQualityDrop}{0.7}
\providecommand{\PHOnlineLiPLatency}{370/440}
\providecommand{\PHOnlineLiPControllerLatency}{11}
\providecommand{\PHOfflineTeacherCalls}{9.7}
\providecommand{\PHOfflineCacheHit}{74.8\%}
\providecommand{\PHOfflineGPUHours}{336}
\providecommand{\PHOfflineStorageGB}{95}
\providecommand{\PHOfflineExtraGPUHours}{95}
\providecommand{\PHMainULaMPThreePurpleMAE}{0.638}
\providecommand{\PHMainULaMPThreeLiPMAE}{0.628}
\providecommand{\PHMainULaMPThreeMAEReduction}{1.5\%}
\providecommand{\PHOracleKStarShares}{17.7\%, 20.7\%, 18.0\%, 15.0\%, 11.0\%, 10.3\%, and 7.3\%}

\begin{table*}[!t]
    \vspace{-1.5em} 
    \centering
    \begingroup
    \small
    \setlength{\tabcolsep}{4.9pt}
    \renewcommand{\arraystretch}{1.1}
    \begin{tabular}{@{}ccccccccccc@{}}
    \toprule
    \rule{0pt}{1.2em}\textbf{Task} & Metric
    & BM25 & Contriever & IC-RALM & REPLUG
    & RankGPT & ICR & RSPG-Pre & PURPLE & \method{} \\
    \midrule
    \multicolumn{11}{@{}l}{\textit{With Qwen3.5-9B}} \\
    \textbf{LaMP-1} & Acc./F1
    & 63.1/61.5 & 64.5/63.0 & \underline{67.1}/\underline{65.6} & 65.7/64.1 & 67.0/65.3 & 64.8/63.4 & 66.2/64.7 & 66.7/65.1 & \textbf{68.6}$^{*}$/\textbf{67.0}$^{*}$ \\
    \textbf{LaMP-2} & Acc./F1
    & 50.2/45.8 & 49.8/44.9 & 50.1/46.0 & 48.7/44.7 & 50.2/46.0 & 51.2/46.6 & 50.3/46.1 & \underline{52.1}/\underline{47.0} & \textbf{53.9}$^{*}$/\textbf{48.6}$^{*}$ \\
    \textbf{LaMP-3} & MAE/RMSE
    & 31.7/68.8 & 31.4/67.6 & 29.3/66.2 & 28.5/62.4 & \underline{28.2}/\underline{61.9} & 30.5/67.0 & 29.4/66.4 & 29.0/65.5 & \textbf{26.9}$^{*}$/\textbf{58.3}$^{*}$ \\
    \textbf{LaMP-4} & R-1/R-L
    & 19.0/16.7 & 18.8/17.0 & 17.7/16.8 & 19.1/17.5 & 18.3/17.6 & 19.1/17.3 & 18.2/17.2 & \underline{19.4}/\underline{17.8} & \textbf{19.9}$^{*}$/\textbf{18.1}$^{*}$ \\
    \textbf{LaMP-5} & R-1/R-L
    & 45.5/39.9 & 46.1/40.7 & 45.8/42.3 & 45.7/41.8 & \underline{47.5}/\underline{42.8} & 46.4/41.8 & 46.1/41.3 & 45.3/42.5 & \textbf{48.2}$^{*}$/\textbf{43.6}$^{*}$ \\
    \textbf{LaMP-7} & R-1/R-L
    & 49.3/44.8 & 48.8/45.6 & 51.7/47.0 & 51.0/47.5 & 49.8/46.9 & 49.8/47.1 & 49.4/46.1 & \underline{51.9}/\underline{47.8} & \textbf{53.0}$^{*}$/\textbf{48.7}$^{*}$ \\
    \midrule
    \multicolumn{11}{@{}l}{\textit{With Llama-3.1-8B-Instruct}} \\
    \textbf{LaMP-1} & Acc./F1
    & 62.2/59.7 & 63.7/61.4 & 64.0/62.5 & 64.8/62.1 & \underline{66.1}/\underline{63.9} & 63.9/61.1 & 65.3/62.7 & 65.8/63.4 & \textbf{67.6}$^{*}$/\textbf{64.9}$^{*}$ \\
    \textbf{LaMP-2} & Acc./F1
    & 47.9/43.8 & 48.9/44.0 & 49.2/45.1 & 49.4/44.9 & 49.3/45.0 & 50.3/45.7 & 49.5/45.2 & \underline{51.2}/\underline{46.1} & \textbf{52.9}$^{*}$/\textbf{47.7}$^{*}$ \\
    \textbf{LaMP-3} & MAE/RMSE
    & 32.6/70.0 & 32.3/67.7 & 30.3/68.6 & 29.4/67.5 & 30.2/65.1 & 31.4/64.1 & \underline{29.1}/\underline{62.3} & 29.9/68.0 & \textbf{27.8}$^{*}$/\textbf{59.4}$^{*}$ \\
    \textbf{LaMP-4} & R-1/R-L
    & 17.9/17.1 & 17.1/16.6 & 17.3/16.4 & 18.6/17.0 & 18.5/16.3 & 18.6/16.9 & 17.8/16.8 & \underline{18.9}/\underline{17.3} & \textbf{19.4}$^{*}$/\textbf{17.6}$^{*}$ \\
    \textbf{LaMP-5} & R-1/R-L
    & 44.7/39.2 & 45.3/39.9 & 45.0/41.5 & 44.9/41.0 & 45.6/41.5 & \underline{46.7}/\underline{42.0} & 45.3/40.5 & 44.5/41.7 & \textbf{47.4}$^{*}$/\textbf{42.8}$^{*}$ \\
    \textbf{LaMP-7} & R-1/R-L
    & 47.9/43.5 & 47.4/44.3 & 50.3/45.7 & 49.6/46.2 & 48.4/45.6 & 48.4/45.8 & 48.0/44.8 & \underline{50.5}/\underline{46.5} & \textbf{51.6}$^{*}$/\textbf{47.4}$^{*}$ \\
    \bottomrule
    \end{tabular}
    \endgroup
    \caption{Main results on the LaMP time-based split with Qwen3.5-9B (top) and Llama-3.1-8B-Instruct (bottom) as frozen task generators. Accuracy, F1, and ROUGE are reported on a 0--100 scale; MAE and RMSE are multiplied by 100. 
    The best and second-best results within each generator are shown in bold and underlined, respectively.
    The superscript $^{*}$ denotes a statistically significant improvement over the strongest baseline under a two-sided paired $t$-test across the same ten runs ($p<0.05$).
    }
    \label{tab:main-results}
    \vspace{-1.0em}
    \end{table*}

\section{Experiments}
\label{sec:experiments}

\subsection{Experimental Setup}
\label{sec:experimental-setup}

\paragraph{Dataset and Evaluation.}
Our experiments use the \textbf{LaMP} benchmark \citep{salemi-etal-2024-lamp}, which covers six personalized tasks, including three classification tasks—\textbf{LaMP-1} (Citation Identification, binary), \textbf{LaMP-2} (Movie Tagging, multiclass with 15 categories), and \textbf{LaMP-3} (Product Rating, ordinal from 1–5 stars)and four text generation tasks: \textbf{LaMP-4} (News Headline Generation), \textbf{LaMP-5} (Scholarly Title Generation), and \textbf{LaMP-7} (Tweet Paraphrasing).

Following previous work (Salemi et al., 2024b,a; Shi et al., 2025), we report accuracy(Acc.) and F1 score for LaMP-1 and LaMP-2, mean absolute error (MAE) and root mean squared error (RMSE) for LaMP-3. We evaluate text generation performance on LaMP-4, LaMP-5, and LaMP-7 using ROUGE-1 (R-1) and ROUGE-L (R-L) (Lin, 2004).
Dataset statistics, the time-based (T) split protocols, and detailed metric definitions are reported in Appendix~\ref{app:task-setup}.

\paragraph{Implementation Details.}

We use Qwen3.5-9B\citep{qwen-team-2026-qwen35} as the primary frozen task generator in all main experiments. We repeat the evaluation with Llama-3.1-8B-Instruct\citep{dubey-etal-2024-llama3} as an alternative frozen task generator. Our frozen retriever is Contriever \citep{izacard-etal-2022-contriever}, which returns at most $M=20$ legal records. Further implementation details are provided in Appendix~\ref{app:implementation}.

\subsubsection{Baselines}
\label{sec:baselines}

The main comparison contains eight retrieval-augmented baselines. BM25 \citep{robertson-zaragoza-2009-bm25} and Contriever \citep{izacard-etal-2022-contriever} provide sparse and frozen dense relevance rankings. RankGPT \citep{sun-etal-2023-rankgpt} performs zero-shot listwise permutation reranking, whereas ICR \citepalias{chen-etal-2025-icr} derives calibrated relevance scores from query-to-record attention. IC-RALM \citep{ram-etal-2023-icralm} incorporates records through decoding-time context switching, while REPLUG \citep{shi-etal-2024-replug} marginalizes token predictions from record-conditioned generation streams using normalized retrieval scores.
RSPG-Pre \citep{salemi-etal-2024-optimization} uses a Longformer selector before generation to choose among multiple retrievers.
PURPLE \citep{du-etal-2026-optimizing} learns an order-sensitive Plackett--Luce profile policy using the frozen generator's reference-response log-likelihood as reward.
Detailed baseline configurations, shared budgets and prompts are provided in Appendix~\ref{app:implementation}.

\subsection{Main Results}
\label{sec:main-results}

\method{} consistently achieves the best performance across tasks, metrics, and generator backbones. As shown in Table~\ref{tab:main-results}, it ranks first in all 24 reported metric cells: six tasks with two metrics each under both Qwen3.5-9B and Llama-3.1-8B-Instruct. 
With Qwen3.5-9B, \method{} improves over the strongest baseline by 1.5/1.4 points on LaMP-1 and by 1.8/1.6 points on LaMP-2. On the ordinal-rating task LaMP-3, it reduces MAE and RMSE by 1.3 and 3.6 points, respectively. The gains extend to all generation tasks: \method{} improves ROUGE-1/ROUGE-L by 0.5/0.3 points on LaMP-4, 0.7/0.8 points on LaMP-5, and 1.1/0.9 points on LaMP-7. Together, these results establish the effectiveness of \method{} across classification, ordinal prediction, and personalized generation. The same pattern transfers to Llama-3.1-8B-Instruct. The consistent improvements across two frozen generators support the robustness of \method{} to the choice of task backbone.

\subsection{Ablation Analysis}
\label{sec:ablations}

Figure~\ref{fig:ablations} compares four single-component ablations. \emph{w/o Long-horizon} replaces long-horizon supervision with myopic targets, \emph{w/o Adaptive \stopa{}} forces the development-selected $k$, \emph{w/o User-Specificity} sets $\beta=0$, and \emph{w/o Token-Cost} sets $\lambda=0$.

\paragraph{Long-horizon supervision and adaptive stopping.}
Both components are important for selecting sufficient rather than merely relevant history. Replacing long-horizon supervision with myopic targets significantly degrades task quality, lengthens profiles, and increases harm, showing that locally attractive records do not reliably form a useful final profile.
Disabling adaptive stopping produces the same pattern: fixed-length selection cannot avoid unnecessary evidence.
Together, these results validate reasoning about future selections when additional personalization is no longer beneficial.

\paragraph{User specificity and token cost play distinct roles.}
Removing user specificity significantly reduces task quality though profiles become slightly shorter. Removing token cost yields a slight improvement in aggregate task quality, but significantly increases both token usage while making early stopping much less frequent. The token-cost term is therefore useful: it prevents marginal quality gains from being purchased with disproportionately long and riskier profiles, yielding a better quality--cost--harm trade-off.

\begin{figure}[!t]
\centering
\includegraphics[width=\columnwidth]{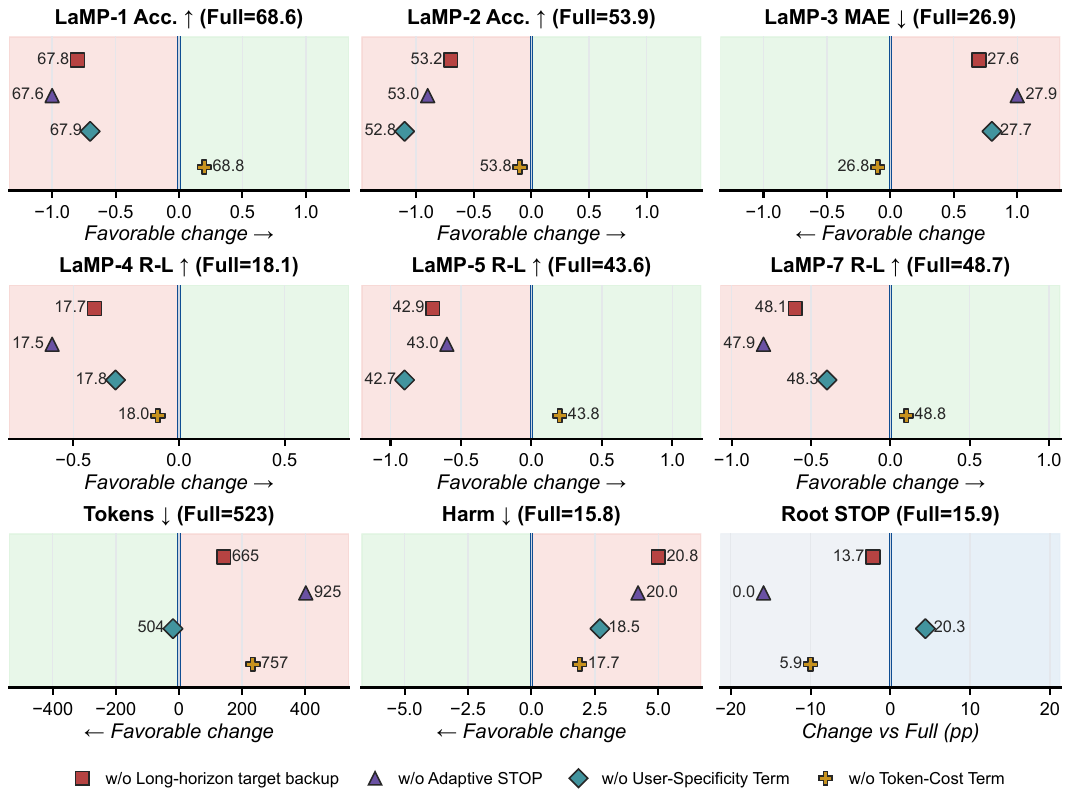}
\caption{Component ablations. Marker positions show mean changes from full \method{}. Rightward is favorable for Acc. and R-L, whereas leftward is favorable for MAE, Tokens, and Harm. Root \stopa{} has no single favorable direction.}
\label{fig:ablations}
\vspace{-1.5em}
\end{figure}

\subsection{Is More Personalization Always Beneficial?}
\label{sec:always-on}

\paragraph{Longer fixed profiles are not always better.}
Figure~\ref{fig:always-beneficial}(a) shows that fixed-length personalization does not improve monotonically as more history is added. Here, quality aggregates direction-normalized evaluation metrics across all six tasks. Among the evaluated lengths, quality is highest at $k=\PHFixedPeakK{}$ (\PHFixedPeakQuality{}) but falls by \PHFixedPeakToTenQualityGap{} points at $k=10$, while harm relative to the empty profile rises from \PHFixedFiveHarm{} to \PHFixedTenHarm{}. This trade-off motivates instance-adaptive rather than fixed profile lengths. Appendix Table~\ref{tab:app-fixed-k} reports the measured quality, token cost, and harm values for the full fixed-length grid.

\paragraph{Overriding predicted \stopa{} becomes increasingly risky.}
Figure~\ref{fig:always-beneficial}(b) forces the controller to append its next one, two, or three records after predicted \stopa{}. From $+1$ to $+3$, the beneficial share falls from \PHContinueOneBenefit{} to \PHContinueThreeBenefit{}, whereas the harmful share rises from \PHContinueOneHarm{} to \PHContinueThreeHarm{} alongside increasing token cost. Thus, deeper continuation is less likely to improve the objective and more likely to hurt performance. Appendix Table~\ref{tab:app-forced-continuation} gives the policy-by-depth breakdown, and Appendix Figure~\ref{fig:app-stop-calibration} reports calibration of the learned stop confidence to the bounded-tree teacher.

\begin{figure}[!t]
    \centering
    \includegraphics[width=\linewidth]{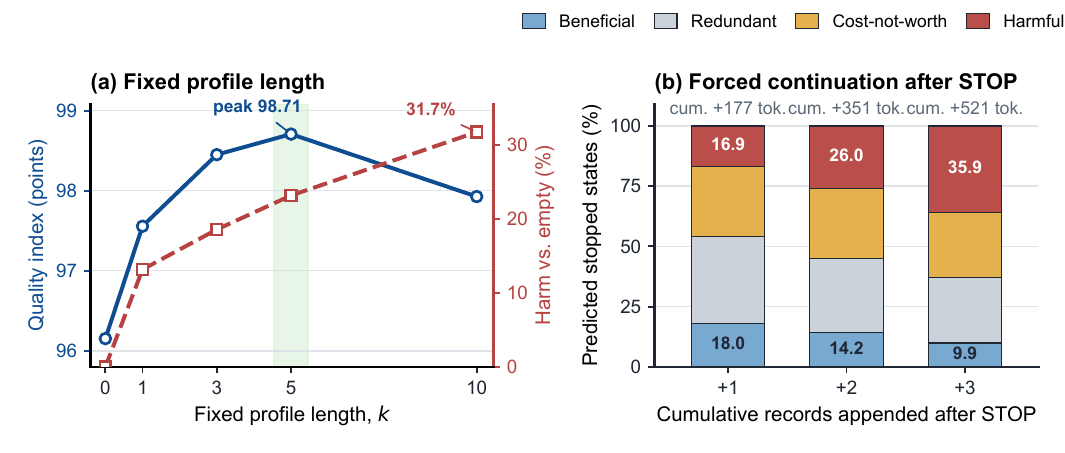}
    \caption{Measured profile-length and stopping diagnostics.
    (a) Direction-normalized quality and harm rate relative to the empty profile for fixed-$k$ Contriever profiles.
    (b) Outcome shares after appending one to three controller-selected records beyond predicted \stopa{} prefixes.
    }
\label{fig:always-beneficial}
\end{figure}

\subsection{User Specificity}
\label{sec:specificity-results}

\begin{table}[!t]
\centering
\small
\setlength{\tabcolsep}{2.0pt}
\begin{tabular}{lcccc}
\toprule
Intervention & Quality & Ref.-LL gain & $\Delta$ target & Match SMD \\
\midrule
Target-user profile & 100.0 & +0.041 & +0.001 & 0.002 \\
Matched other user & 99.1 & +0.024 & -0.018 & 0.041 \\
Action shuffle & 98.7 & +0.018 & -0.025 & 0.039 \\
Owner permutation & 98.6 & +0.011 & -0.029 & 0.044 \\
Random other user & 98.5 & +0.012 & -0.031 & 0.180 \\
Whole-profile swap & 98.0 & +0.005 & -0.037 & 0.191 \\
\bottomrule
\end{tabular}
\caption{Frozen-generator specificity interventions. Quality is normalized so the target-user profile is 100; likelihood values are per target token.}
\label{tab:specificity-interventions}
\end{table}

Content-matched other-user replacement shifts the normalized quality index from \PHSpecificityTargetQuality{} to \PHSpecificityMatchedQuality{} and reference-likelihood gain from \PHSpecificityTargetLogLik{} to \PHSpecificityMatchedLogLik{} per token (Table~\ref{tab:specificity-interventions}). This matched audit covers \PHMatchCoverage{} of records (mean absolute standardized difference: \PHMatchMeanSMD{}), excluding unmatched cases. Together with the $\beta=0$, context-only, action-only, and no-harm-gate rows, these interventions assess whether the observed gains arise from target-user-specific information rather than generic contextual utility.

\paragraph{Human evaluation.}
In the human evaluation on LaMP-4 and LaMP-7, the observed win-rate ranges are \PHHumanFaithfulnessWinRange{} for task faithfulness, \PHHumanStyleWinRange{} for user-style consistency, and \PHHumanOverallWinRange{} overall, with Fleiss' $\kappa$ spanning \PHHumanKappaRange{} (Appendix~\ref{app:specificity-audit}). The study randomized presentation order, concealed method identity, and adjudicated only protocol violations; ties were retained as valid outcomes.

\subsection{Hyperparameter Sensitivity}
\label{sec:hyperparameter-sensitivity}

We examine how the two objective weights shape distinct aspects of profile construction. We vary $\beta$ and $\lambda$ one at a time; at each setting, we hold the remaining configuration fixed, recompute tree backups, and train the corresponding controller. Figure~\ref{fig:beta-lambda-sensitivity} pairs the direction-normalized quality index across all six tasks (higher is better) with specificity gain for $\beta$ and profile length for $\lambda$.

\begin{figure}[!t]
    \centering
    \includegraphics[width=\columnwidth]{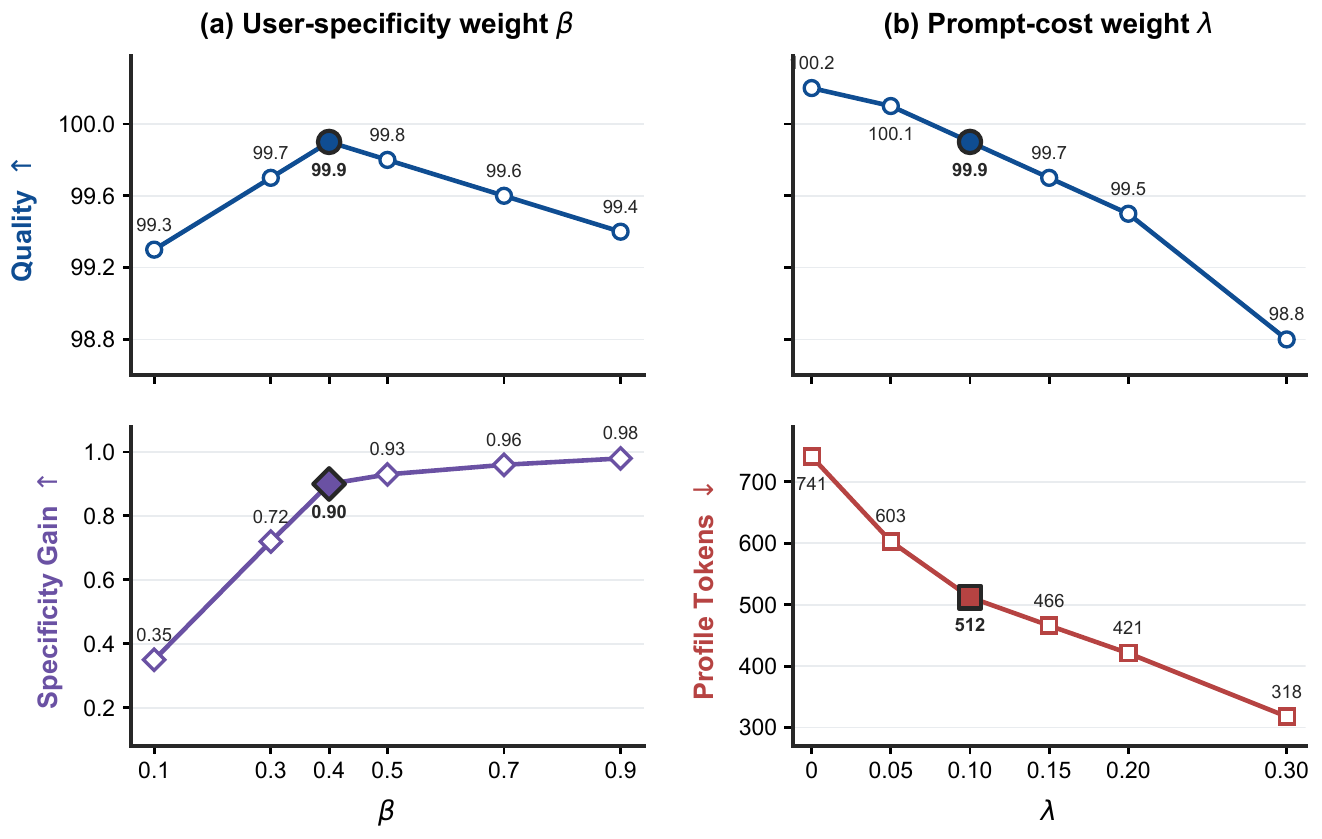}
    \caption{
    Hyperparameter sensitivity of ENOUGH to $\beta$ (specificity weight, left) and $\lambda$ (token-cost weight, right).}
    \label{fig:beta-lambda-sensitivity}
    \end{figure}

\begin{figure}[!t]
    \centering
    \includegraphics[width=\linewidth]{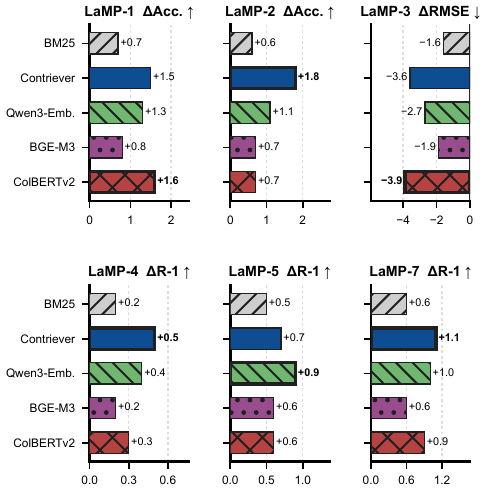}
    \caption{Performance gaps between ENOUGH and the strongest compatible baseline across different retrievers.}
    \label{fig:retriever-delta-taskwise}
    \end{figure}

The sweeps motivate the selected operating point $(\beta,\lambda)=(0.4,0.10)$. In the $\beta$ sweep, $\beta=0.4$ gives the highest quality index (99.9) while reaching a specificity gain of 0.90, close to the observed maximum of 0.98. In the $\lambda$ sweep, $\lambda=0.10$ reduces mean profile length from 741 to 512 tokens (30.9\%) relative to $\lambda=0$, while the quality index changes from 100.2 to 99.9. Additional development-set sweeps over $M$, $K_{\max}$, $K^{-}$, $(J,W)$, and the training-loss weights $\eta$ and $\zeta$ are reported in Appendix~\ref{app:additional-hyperparameter-sensitivity}.

\subsection{Retriever Independence}
\label{sec:retriever-independence}

We test whether the gains of \method{} are specific to Contriever by repeating the evaluation with BM25, Contriever, Qwen3-Embedding-0.6B~\citep{zhang-etal-2025-qwen3-embedding}, BGE-M3~\citep{chen-etal-2024-m3}, and ColBERTv2~\citep{santhanam-etal-2022-colbertv2}. 
For each retriever, \method{} and all compatible baselines use the same retriever-specific candidate pool and generation settings. Figure~\ref{fig:retriever-delta-taskwise} reports the resulting test-set gap, $\Delta=\text{\method{}}-\text{strongest baseline}$.
\method{} outperforms the strongest baseline in all 30 retriever--task combinations. Across retrievers, the Accuracy and ROUGE-1 gains range from $+0.2$ to $+1.8$ points. The largest margin occurs under different retrievers across tasks, suggesting that the benefit of \method{} comes from its selection and stopping strategy rather than from a particular retrieval model.

\begin{table}[!t]
\centering
\small
\begin{tabular}{lrrrrr}
\toprule
Method & Quality & Tokens & Calls & p50 ms & p95 ms \\
\midrule
\method{} & 100.00 & 510 & 1 & 370 & 440 \\
PURPLE & 96.49 & 905 & 1 & 398 & 474 \\
RankGPT & 96.32 & 869 & 6 & 1120 & 1379 \\
REPLUG & 95.38 & 1388 & 5 & 1323 & 1595 \\
IC-RALM & 94.82 & 1210 & 7 & 1463 & 1793 \\
ICR & 94.30 & 819 & 4 & 865 & 1049 \\
RSPG-Pre & 94.29 & 765 & 3 & 796 & 951 \\
Contriever & 92.17 & 931 & 1 & 365 & 442 \\
BM25 & 90.96 & 905 & 1 & 360 & 421 \\
\bottomrule
\end{tabular}
\caption{Measured online inference cost under identical generation settings. Tokens is the mean selected-profile length; Calls is the number of generator-equivalent invocations; p50/p95 are end-to-end latency in milliseconds.}
\label{\efficiencytablelabel}
\end{table}

\subsection{Efficiency}
\label{sec:efficiency}

Table~\ref{tab:efficiency} shows that \method{} achieves the highest six-task normalized Quality (100.00) with the shortest selected profile (510 tokens) and one generator-equivalent call. Compared with the strongest baseline, PURPLE, it improves Quality by 3.51 points while using 43.6\% fewer profile tokens and reducing p50/p95 latency from 398/474 to 370/440 ms. BM25 and Contriever have comparable one-call latency but trail \method{} by 9.04 and 7.83 Quality points, respectively. Multi-call baselines require 3--7 calls and exhibit substantially higher p50/p95 latency (796--1463/951--1793 ms), indicating that the Quality gain of \method{} does not rely on additional online calls. Appendix~\ref{app:efficiency} separate the offline training-time and task-family latency views.

\section{Conclusion}
\label{sec:conclusion}

This paper introduced \method{}, a framework for minimal sufficient personalization that jointly decides whether to personalize, which behavioral records to include, and when to stop. By distilling offline counterfactual evaluations of downstream gain, user specificity, and token cost into a long-horizon controller, \method{} constructs input-adaptive profiles while requiring only one frozen-generator call at inference. Across six LaMP tasks and two generator backbones, \method{} achieves the best result in all 24 task--metric settings; it also improves normalized quality by 3.51 points over the strongest baseline while using 43.6\% fewer profile tokens. These results show that effective personalization depends on identifying when selected user evidence is sufficient, not on exposing models to more history. Future work will extend this formulation to evolving histories and interactive settings.


\bibliography{references}

\clearpage
\appendix
\renewcommand{\efficiencytablelabel}{tab:app-efficiency}
\section{Related Work}
\label{app:related-work}

\subsection{Personalized Large Language Models}
Personalized large language models adapt generation to user histories,
preferences, or interaction patterns through in-context records, explicit
profiles, or user-specific parameters
\citep{zhang-etal-2025-personalization-survey,xu-etal-2025-personalized}.
Guided Profile Generation distills sparse personal context into concise
natural-language descriptions \citep{zhang-2024-guided}, whereas OPPU assigns
each user a parameter-efficient module and combines this parametric memory with
retrieved profile information \citep{tan-etal-2024-democratizing}. These
approaches provide history abstraction or persistent user modeling. In
contrast, our setting freezes both retriever and generator, represents history
as behavioral context--action records, and shares a task-conditioned controller
across users and tasks. \method{} constructs a transparent ordered profile for
each input, maintains no user-specific parameters, and may select no record
when personalization is not worth its prompt-token cost.

Reliable personalization must also separate user-specific benefit from generic
task improvement. ExPerT evaluates content and writing-style alignment
\citep{salemi-etal-2025-expert}, while CFRAG uses other users' histories as
generation evidence \citep{shi-etal-2025-cfrag}. Because a generically useful
in-context example can improve reference likelihood even when its owner is
irrelevant, \method{} combines gain over the empty profile with an offline
matched-replacement comparison. For each selected slot, it contrasts the
target-user record with other-user records matched on task, pre-action content,
query relevance, length, and time when available, while holding the remaining
profile fixed. Specificity is credited only if the complete target-user profile
also improves over no personalization. Matched records supervise the controller
but are discarded at inference; the resulting score estimates a replacement
effect for the frozen generator, not a causal effect of user preference.

\subsection{Retrieval-Augmented Personalized Large Language Models}
Retrieval-augmented personalization conditions a language model on selected
user-history records without updating the generator for each user. LaMP
established this setting across classification and generation tasks
\citep{salemi-etal-2024-lamp}. Later work optimizes selection for downstream
utility: RSPG chooses among no retrieval and multiple personalized retrievers
using generation feedback \citep{salemi-etal-2024-optimization}; CFRAG expands
the evidence pool with similar users and learns retrieval and reranking
\citep{shi-etal-2025-cfrag}; and PURPLE trains an order-sensitive
Plackett--Luce profile policy from reference-response likelihood
\citep{du-etal-2026-optimizing}. General-purpose rerankers RankGPT
\citep{sun-etal-2023-rankgpt} and ICR \citepalias{chen-etal-2025-icr}
provide strong pool-reordering baselines but do not model profile termination.
CFRAG returns a top-$k$ set, PURPLE an ordered $K$-permutation, and our LaMP
adaptations of RankGPT and ICR a preset number of records. RSPG may choose no
retrieval but does not make record-wise continuation decisions after selecting
a retriever. \method{} instead treats record inclusion and \stopa{} as actions
in one sequential problem after frozen candidate retrieval. It can return an
empty profile or any feasible length up to $K_{\max}$, evaluating terminal
profiles by downstream gain, incremental prompt-token cost, and user-specific
benefit.

Other retrieval-augmented systems alter how evidence enters generation:
IC-RALM switches context during decoding \citep{ram-etal-2023-icralm}, whereas
REPLUG marginalizes predictions from multiple record-conditioned streams
\citep{shi-etal-2024-replug}. These methods primarily address
knowledge-centric evidence and use inference paths different from personalized
profile construction. \method{} makes all inclusion and stopping decisions
before generation with a lightweight controller trained from bounded-tree
outcomes, then invokes the frozen generator exactly once; a root \stopa{} means
no profile is injected, although candidate retrieval has already occurred.

\clearpage

\section{Notation}
\label{app:notation}

\begin{table}[H]
\centering
\small
\begin{tabular}{ll}
\toprule
Symbol & Meaning \\
\midrule
$C_i$ & top-$M$ candidate records from the frozen retriever \\
$\mathbf S_t$ & ordered profile after $t$ inclusion actions \\
$g_i(\mathbf S)$ & downstream likelihood gain over the empty profile \\
$p_i(\mathbf S)$ & matched-replacement user specificity \\
$c_i(\mathbf S)$ & normalized incremental prompt-token cost \\
$U_i(\mathbf S)$ & cost-free personalized utility \\
$J_i^{\lambda}(\mathbf S)$ & cost-regularized terminal objective \\
$Q_{\theta}^{g,p,c}$ & predicted outcome components of one action \\
$\stopa{}$ & terminate with the current profile \\
\bottomrule
\end{tabular}
\caption{Core notation.}
\label{tab:notation}
\end{table}

\FloatBarrier

\section{Algorithms}
\label{app:algorithms}

\begin{algorithm}[H]
\caption{Offline bounded-tree supervision}
\label{alg:offline}
\begin{algorithmic}[1]
\Require Training data $\mathcal D$; frozen retriever $R$; frozen generator $G$;
candidate limit $M$; horizon $K_{\max}$; search budgets $J,W$
\Ensure Labeled state-action set $\mathcal Z$
\State $\mathcal Z\gets\varnothing$
\For{each training instance $i$}
  \State $C_i\gets R(x_i,H_{u_i};M)$ and remove target duplicates
  \State Precompute fixed matched replacements for every $d\in C_i$
  \State Sample partial-profile roots from the current state-proposal mixture
  \For{each sampled root state $s$}
    \State Build bounded prefix tree $\mathcal T_i(s)$ with budgets $J,W$
    \State Score every unique terminal prefix and its sampled slot replacements
    \State Back up $J_i^{\lambda}$ from leaves to expanded actions
    \For{each labeled $(s',a)$ in $\mathcal T_i(s)$}
      \State Propagate the three components of the same best leaf
      \State $\mathcal Z\gets\mathcal Z\cup\{(s',a,\mathbf R_i(s',a))\}$
    \EndFor
  \EndFor
\EndFor
\State \Return $\mathcal Z$
\end{algorithmic}
\end{algorithm}

\paragraph{Offline complexity.}
With $N_s$ sampled roots, horizon $K_{\max}$, search width $W$, action budget $J$, and $K^{-}$ matched replacements, offline label construction requires at most
\begin{equation}
    O\!\left(N_sK_{\max}WJ(K^{-}+2)\right)
\end{equation}
teacher-forced scoring operations before prefix caching. The $K^{-}+2$ factor accounts for the target profile, the leave-one-out profile, and the matched replacements for the sampled slot. These claims assume cached candidate representations and do not include the frozen candidate-retrieval cost.

\begin{algorithm}[H]
  \caption{Minimal sufficient profile construction}
  \label{alg:inference}
  \begin{algorithmic}[1]
  \Require Input $(\tau,x,H_u)$; frozen retriever $R$; controller $Q_{\theta}$;
  frozen generator $G$
  \Ensure Ordered profile $\widehat{\mathbf S}$ and output $\widehat y$
  \State $C\gets R(x,H_u;M)$; $\mathbf S\gets\varnothing$
  \While{$|\mathbf S|<\min(K_{\max},|C|)$}
    \State Mask repeated records and actions exceeding $L_{\max}$
    \State Score all feasible record actions and \stopa{} in one controller batch
    \State $a\gets\arg\max_{a'\in\mathcal A(s)}Q_{\theta}^{\mathrm{net}}(s,a')$
    \If{$a=\stopa{}$}
      \State \textbf{break}
    \EndIf
    \State $\mathbf S\gets\mathbf S\oplus a$
  \EndWhile
  \State $\widehat y\gets G(P_{\tau}(x,\mathbf S))$
  \State \Return $(\mathbf S,\widehat y)$
  \end{algorithmic}
  \end{algorithm}

\paragraph{Test-time procedure.}
Algorithm~\ref{alg:inference} gives the inference procedure. Candidate record embeddings are cached and all remaining actions are scored in a batch at each step. Neither record selection nor the \stopa{} decision uses a reference output, matched replacement, generator likelihood, counterfactual rollout, or generator call. The controller performs at most $K_{\max}$ lightweight decision steps. Once profile construction terminates, the autoregressive generator is called exactly once on the final prompt.

\paragraph{Online complexity.}
Online profile construction uses at most $O(K_{\max}M)$ batched lightweight action scores and exactly one final generator call. This bound assumes cached candidate representations and excludes the frozen candidate-retrieval cost.

\FloatBarrier

\section{Proofs}
\label{app:proofs}

\begin{table*}[t]
\centering
\small
\setlength{\tabcolsep}{3.0pt}
\begin{tabular}{lccccccc}
\toprule
Task & \#Train & \#Dev & \#Test & Input length & Output length & Profile size & \#Labels \\
\midrule
\shortstack[l]{LaMP-1: Personalized Citation Identification} & 6,542 & 1,500 & 1,500 & $51.44\mathbin{\pm}5.71$ & -- & $84.16\mathbin{\pm}47.54$ & 2 \\
\midrule
\shortstack[l]{LaMP-2: Personalized Movie Tagging} & 5,073 & 1,410 & 1,557 & $92.40\mathbin{\pm}21.95$ & -- & $86.76\mathbin{\pm}189.53$ & 15 \\
\midrule
\shortstack[l]{LaMP-3: Personalized Product\ Rating} & 20,000 & 2,500 & 2,500 & $128.18\mathbin{\pm}146.25$ & -- & $185.41\mathbin{\pm}129.30$ & 5 \\
\midrule
\shortstack[l]{LaMP-4: Personalized News Headline Generation} & 12,500 & 1,500 & 1,800 & $29.98\mathbin{\pm}12.09$ & $10.08\mathbin{\pm}3.11$ & $204.60\mathbin{\pm}250.75$ & -- \\
\midrule
\shortstack[l]{LaMP-5: Personalized Scholarly Title Generation} & 14,682 & 1,500 & 1,500 & $162.34\mathbin{\pm}65.64$ & $9.72\mathbin{\pm}3.21$ & $87.89\mathbin{\pm}53.64$ & -- \\
\midrule
\shortstack[l]{LaMP-7: Personalized Tweet Paraphrasing} & 13,437 & 1,498 & 1,500 & $29.72\mathbin{\pm}7.01$ & $16.97\mathbin{\pm}5.68$ & $15.71\mathbin{\pm}14.86$ & -- \\
\bottomrule
\end{tabular}
\caption{Statistics of the LaMP time-based (T) split used in our main experiments. Counts are instances; input and output lengths are whitespace-delimited words; profile size is the number of historical records. Length and profile statistics are mean $\mathbin{\pm}$ standard deviation over all train, development, and test instances. Classification and rating outputs are labels, so their output lengths are omitted.}
\label{tab:app-dataset-statistics}
\end{table*}

\subsection{Proposition~\ref{prop:exact-value-stopping}: Exact-value stopping}
\label{app:proof-exact-value-stopping}

\begin{proof}
Fix an instance $i$ and a reachable state $s$ whose current ordered profile
prefix is $\mathbf S$. Let $\mathcal L_i(\mathbf S)$ be the set of terminal
profile prefixes reachable from $\mathbf S$, including $\mathbf S$ itself via
the \stopa{} action. Because $C_i$ is finite, records cannot repeat, and profile
length is at most $K_i$, every candidate subtree is finite. Let $h(\mathbf S)$
denote the maximum number of record-action edges on a path starting from
$\mathbf S$.

We first establish, by induction on $h(\mathbf S)$, that
\begin{equation}
  V_i^{\star}(\mathbf S)
  =\max_{\mathbf L\in\mathcal L_i(\mathbf S)}
    J_i^{\lambda}(\mathbf L).
  \label{eq:descendant-value}
\end{equation}
If $h(\mathbf S)=0$, no record action is feasible, so
$\mathcal L_i(\mathbf S)=\{\mathbf S\}$ and the boundary condition gives
$V_i^{\star}(\mathbf S)=J_i^{\lambda}(\mathbf S)$. Now suppose
$h(\mathbf S)>0$ and that Eq.~\eqref{eq:descendant-value} holds for every
child prefix. The terminal descendants decompose as
\begin{equation}
  \mathcal L_i(\mathbf S)
  =\{\mathbf S\}\cup
    \bigcup_{d\in\mathcal A(s)\setminus\{\stopa\}}
      \mathcal L_i(\mathbf S\oplus d).
\end{equation}
Applying the induction hypothesis to each child and then Eq.~\eqref{eq:bellman}
yields
\begin{align}
  \max_{\mathbf L\in\mathcal L_i(\mathbf S)}J_i^{\lambda}(\mathbf L)
  &=\max\left\{
      J_i^{\lambda}(\mathbf S),
      \max_{d\in\mathcal A(s)\setminus\{\stopa\}}
        V_i^{\star}(\mathbf S\oplus d)
    \right\}\\
  &=V_i^{\star}(\mathbf S),
\end{align}
which proves Eq.~\eqref{eq:descendant-value}.

Consider now a policy that selects an action maximizing $Q_i^{\star}(s,a)$.
By the definitions of the action values and the Bellman recursion,
\begin{equation}
  \max_{a\in\mathcal A(s)}Q_i^{\star}(s,a)
  =V_i^{\star}(\mathbf S).
\end{equation}
If the policy selects \stopa{}, then
$J_i^{\lambda}(\mathbf S)=Q_i^{\star}(s,\stopa)
=V_i^{\star}(\mathbf S)$, so Eq.~\eqref{eq:descendant-value} shows that the
current prefix is optimal among all terminal descendants. If it selects a
record $d$, then
$V_i^{\star}(\mathbf S\oplus d)=Q_i^{\star}(s,d)
=V_i^{\star}(\mathbf S)$. The child subtree has smaller height, so repeating
the same argument eventually reaches a terminal prefix $\mathbf L$ with
$J_i^{\lambda}(\mathbf L)=V_i^{\star}(\mathbf S)$. Termination is guaranteed
because every record action descends one level in a finite tree.

When \stopa{} is tied with a record action, it is itself a maximizing action;
the \stopa{} case above therefore shows that resolving the tie in its favor
preserves optimality. Applying the result at the root proves the proposition.
\end{proof}

This guarantee is conditional on access to the exact values $Q_i^{\star}$. It
does not imply that the learned controller $Q_{\theta}$ recovers the optimum of
the complete combinatorial candidate tree.

\section{Supplementary Experimental Details}
\label{app:experiments}

\subsection{Datasets, Time-Based Split Protocol, and Evaluation}
\label{app:task-setup}

We use the six publicly available tasks in the official LaMP snapshot: LaMP-1--5 and LaMP-7. LaMP-6 is excluded because its publicly distributed files contain only identifiers from the licensed Avocado email collection rather than the underlying email text. For the main experiments, we use the time-based (T) protocol, in which interactions are ordered chronologically: earlier records form the profile and later interactions form the prediction targets, thereby evaluating personalization for future interactions of existing users. Table~\ref{tab:app-dataset-statistics} reports time-based statistics computed directly from the files used in the main experiments.

\FloatBarrier

\subsection{Implementation Details}
\label{app:implementation}

\paragraph{Models and search.}
Our frozen retriever is Contriever, which returns at most $M=20$ legal records in the main setting. We use Qwen3.5-9B as the main LLM and Llama-3.1-8B-Instruct for generator transfer; both LLMs are frozen and use deterministic decoding with their respective official chat templates. Qwen3.5-9B is served through its text-only path with \texttt{enable\_thinking=false}. Teacher-forced likelihood is computed only over reference-response tokens, excluding prompt tokens and all chat-template, control, and thinking tokens. The controller has $K_{\max}=10$, uses $K^-=3$ matched alternatives per candidate, and is trained from bounded-tree targets with the policy-induced offline aggregation described below. The search uses $J=8$ candidate expansions and beam width $W=4$ at the default operating point. We set the ranking- and stopping-loss weights in Eq.~\eqref{eq:total-loss} to $\eta=0.2$ and $\zeta=0.4$, respectively. Frozen-model likelihoods and tokenized record encodings are cached across selectors and $\lambda$ values. For each generator, we independently recompute tokenizer costs, likelihoods, controller labels, and caches rather than reusing artifacts produced by another model.

\paragraph{Policy-induced offline aggregation.}
Algorithm~\ref{alg:offline} is the label-construction routine used for both rounds of offline supervision. We first apply it to the initial state-proposal mixture and optimize the controller under Eq.~\eqref{eq:total-loss}. After warm-start convergence, one policy-induced offline data-aggregation round adds states visited by the current controller, reuses the same frozen generator to label them, and retrains the controller on the union of both state pools. No controller update is performed during test-time profile construction.

\paragraph{Baseline configurations.}
BM25 and Contriever independently retrieve the top $K=5$ records from the same leakage-filtered user history using the official task-specific query fields. RankGPT, ICR, and PURPLE receive the same top $M=20$ pool from frozen Contriever and return $K=5$ records; following the published LaMP adaptation, RankGPT and ICR use frozen Llama-3.1-8B-Instruct as a separate reranker. IC-RALM and REPLUG use Contriever evidence under the same five-record budget but retain their native context-switching and marginalization mechanisms; for REPLUG, we use the LM-supervised retrieval variant from its LaMP adaptation. RSPG-Pre operates over its native retriever pool and is retrained with our task generator and $K=5$. \method{} and the single-prompt baselines invoke the task generator once after profile construction, whereas IC-RALM and REPLUG retain their native multi-pass inference. All methods share the record serialization, task prompt, frozen task generator, decoding configuration, and leakage controls. Empty/full profiles, fixed-$k$ and fixed-token controls, Recency, Random, MMR, ROPG-KD, the myopic controller, and forced-length \method{} variants are retained as diagnostics or ablations rather than as main baselines.

\paragraph{Serialization, leakage controls, and matching.}
Each profile record is serialized with a task tag and explicit field boundaries:
\begin{quote}\small
\texttt{[TASK] $\tau$ [HIST\_CONTEXT] $x_d$ [USER\_ACTION] $a_d$ [META] $m_d$}.
\end{quote}
The current input is serialized separately and never copied into the profile. We preserve complete record boundaries, mask an action if its complete serialization would overflow the context window, and never silently truncate records. Within each generator condition, all selectors use the same generator-specific tokenizer and prompt construction. The empty-profile prompt contains the same instructions and delimiters but no historical record, so its comparison changes only the augmentation content. Appendix~\ref{app:prompt-templates} shows the complete logical system and user messages for all six tasks.

For every target, the legal user history contains only interactions whose timestamps strictly precede the target timestamp. We apply this temporal truncation before constructing the retrieval pool or drawing matched replacements. Before retrieval, we remove the target interaction, matching source IDs, normalized exact duplicates, and embedding-near duplicates above a development-set threshold; consequently, neither retrieval nor matching can access an interaction or metadata recorded at or after the target. Matched other-user records come from the same task and legal partition and are matched using pre-action semantic content, length, retrieval-score bin, time bin, and available nonbehavioral metadata. Historical actions, target references, and all post-target fields are unavailable to the matcher. The matched-specificity test therefore estimates a frozen-LLM replacement effect and is not a causal claim about users.

\paragraph{Cost and statistical protocol.}
Profile cost is the actual incremental number of tokenizer tokens, not document count. We additionally record selected records, root \stopa{} rate, harm rate relative to the empty-profile output, generator calls, latency, and throughput. A root \stopa{} means \emph{no profile augmentation}; candidate retrieval has still occurred. Every reranker and generator-equivalent forward pass is included in the cost audit.

We choose the operating point, $\beta$, $\lambda$, loss weights, temperatures $T_r$ and $T_s$, calipers, $M$, $K_{\max}$, $K^-$, $J$, and $W$ using development data only; no test result is used for tuning, and all selected values are frozen before test evaluation. Each automatic result uses three complete pipeline seeds. We pair predictions at the instance level, use a paired 95\% bootstrap clustered by user rather than by isolated record, and apply Holm-corrected comparisons against all main baselines. Reported seed variation reflects variation across complete experimental runs.

\subsection{Generator-Scale Sensitivity}
\label{app:supplementary-results}

We evaluate task-generator capacity on the time-based (T) split by comparing dense Qwen3.5-4B, Qwen3.5-9B, and Qwen3.5-27B~\citep{qwen-team-2026-qwen35} generators while holding the retriever pool, search budget, task prompt, deterministic decoding, and leakage controls fixed. Tokenizer costs, frozen-generator likelihoods, bounded-tree labels, caches, and the controller are recomputed independently for each scale; no controller or generator-dependent artifact is transferred from the 9B condition.

Figure~\ref{fig:app-generator-scale} reports both official metrics for every task and all nine main methods. The 9B column reproduces the corresponding main-table results, while the 4B and 27B columns report independently measured scale conditions. Every cell aggregates three complete pipeline seeds. As in Table~\ref{tab:main-results}, every displayed metric is multiplied by 100.

\begin{figure*}[!t]
\centering
\includegraphics[width=.98\textwidth]{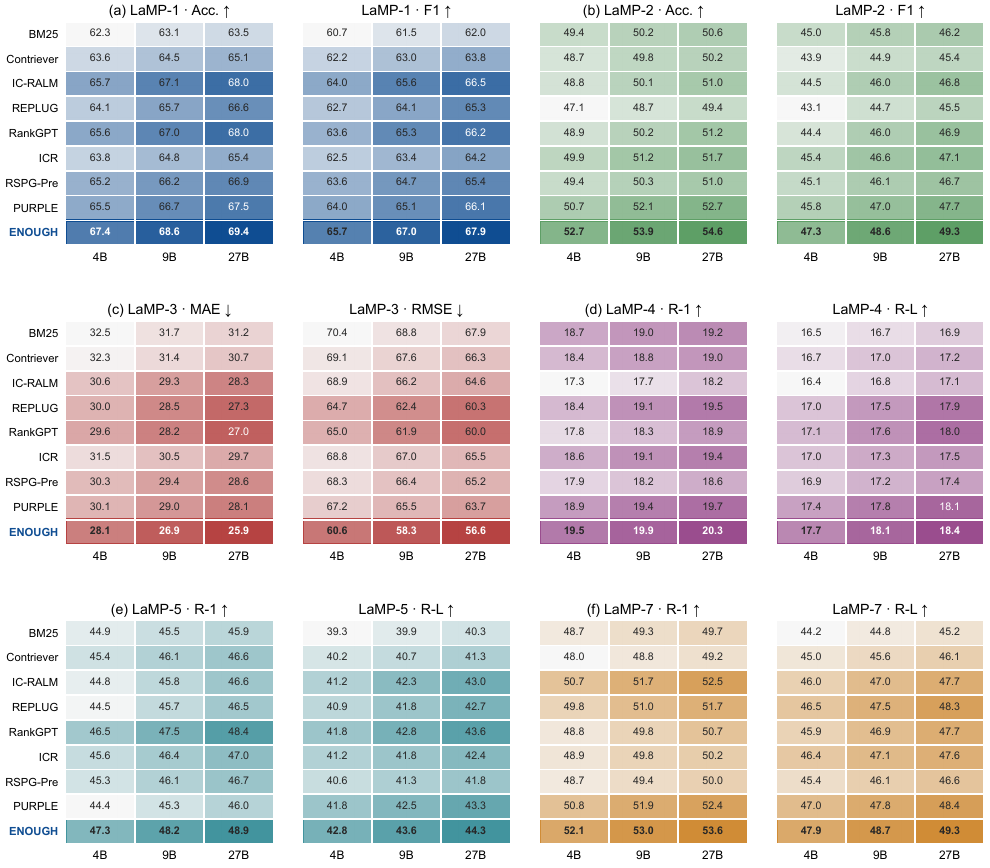}
\caption{Generator-scale sensitivity on the LaMP time-based (T) split, averaged over three complete pipeline seeds. Each task is represented by two adjacent panels, one per official metric, and task-level labels run from (a) through (f); each row therefore contains four panels for two tasks. We compare dense Qwen3.5-4B, Qwen3.5-9B, and Qwen3.5-27B task generators. Cells display raw values; darker shading denotes better performance within each panel after respecting metric direction, so color intensity is not comparable across panels. Higher is better except for MAE and RMSE.}
\label{fig:app-generator-scale}
\end{figure*}

The measured scale response differs across methods, tasks, and metrics rather than following a uniform offset. From 4B to 27B, baseline changes range from 1.2--2.5 points for classification accuracy, 1.2--2.6 for F1, 1.3--2.7 for MAE reduction, 2.5--5.0 for RMSE reduction, and 0.4--2.0 for ROUGE. The consistent gains show that greater generator capacity generally improves downstream quality, while the heterogeneous magnitudes indicate that scale does not affect every task or selector equally.
\FloatBarrier

\subsection{Profile Length and Stopping Diagnostics}
\label{app:stopping}

\paragraph{The diagnostics separate length effects from stopping quality.}
We report three complementary measured diagnostics. Table~\ref{tab:app-fixed-k} tests whether a common fixed length benefits all instances; Table~\ref{tab:app-forced-continuation} tests whether overriding a learned \stopa{} decision remains useful; and Figure~\ref{fig:app-stop-calibration} tests whether the learned stop score tracks its bounded-tree supervision. Unless stated otherwise, results use the time-based split with Qwen3.5-9B and average three complete pipeline seeds. Quality is the six-task macro-average after orienting each official task metric so that higher is better; token counts are actual increments over the empty-profile prompt.

\paragraph{Fixed profiles expose a quality--risk trade-off.}
We first vary the number of records while holding the frozen Contriever retriever, generator, and evaluation protocol fixed. A fixed-$k$ profile contains up to the first $k$ legal retrieved records.

As Table~\ref{tab:app-fixed-k} shows, the aggregate quality index rises from 96.15 with no profile to 98.71 at $k=5$, but falls to 97.93 at $k=10$. Over the same grid, mean profile cost reaches 1,755 tokens and the harm rate relative to the empty-profile output rises from the defined 0\% baseline to 31.7\%. Thus, quality is non-monotonic within the evaluated grid while both cost and instance-level risk increase with length. This diagnostic motivates instance-adaptive length selection; it does not identify $k=5$ as a universally optimal length.

\paragraph{Forced continuation isolates the decision value of \stopa{}.}
For each stopped prefix with enough remaining legal actions, we suppress \stopa{} and cumulatively append $d\in\{1,2,3\}$ records using the controller ordering, the frozen retriever ranking, or a reference-aware oracle-best diagnostic ordering. The $+d$ rows therefore evaluate cumulative continuations from the same prefix rather than the marginal effect of only the $d$th record; the oracle ordering is diagnostic and is not available at deployment. Relative to stopping at that prefix, outcomes are assigned in priority order: \emph{beneficial} if continuation increases $J_i^\lambda$; among the remaining cases, \emph{cost-not-worth} if direction-normalized task quality improves, \emph{harmful} if it worsens, and \emph{redundant} otherwise. This ordering makes the four categories mutually exclusive and exhaustive before rounding. Because $J_i^\lambda$ combines task gain, specificity, and token cost, \emph{cost-not-worth} is shorthand for a task-quality gain rejected by the full objective, not evidence that token cost alone caused the rejection.

\begin{table}[!t]
\centering
\small
\setlength{\tabcolsep}{3.0pt}
\begin{tabular}{lrrr}
\toprule
Length & Quality & Tokens & Harm \\
\midrule
k=0 & 96.15 & 0 & 0.0\% \\
k=1 & 97.56 & 200 & 13.2\% \\
k=3 & 98.45 & 560 & 18.6\% \\
k=5 & 98.71 & 918 & 23.2\% \\
k=10 & 97.93 & 1755 & 31.7\% \\
\bottomrule
\end{tabular}
\caption{Measured fixed-length diagnostic on the time-based split, averaged over three complete pipeline seeds. Profiles contain up to the first $k$ legal Contriever records. Quality is the direction-normalized six-task macro-index; tokens are incremental over the empty-profile prompt; harm is measured relative to the empty-profile output, so $k=0$ is the defined zero-harm baseline.}
\label{tab:app-fixed-k}
\end{table}

\begin{table*}[!t]
\centering
\small
\begin{tabular}{lcccccc}
\toprule
Continuation & Cum. depth & Cum. extra tok. & Beneficial & Redundant & Cost-not-worth & Harmful \\
\midrule
controller-next & +1 & 177 & 18.0\% & 36.2\% & 28.9\% & 16.9\% \\
controller-next & +2 & 351 & 14.2\% & 30.7\% & 29.1\% & 26.0\% \\
controller-next & +3 & 521 & 9.9\% & 27.1\% & 27.0\% & 35.9\% \\
\addlinespace
retriever-next & +1 & 181 & 14.0\% & 33.6\% & 29.0\% & 23.4\% \\
retriever-next & +2 & 359 & 9.7\% & 29.5\% & 28.0\% & 32.9\% \\
retriever-next & +3 & 536 & 7.2\% & 23.8\% & 25.0\% & 43.9\% \\
\addlinespace
oracle-best-continuation & +1 & 170 & 28.2\% & 33.6\% & 26.2\% & 11.9\% \\
oracle-best-continuation & +2 & 331 & 21.8\% & 30.7\% & 29.3\% & 18.2\% \\
oracle-best-continuation & +3 & 490 & 17.8\% & 29.1\% & 28.0\% & 25.1\% \\
\bottomrule
\end{tabular}
\caption{Measured forced-continuation diagnostic from prefixes where \stopa{} was predicted, averaged over three complete pipeline seeds. For each policy, $+d$ cumulatively appends its first $d$ records from the same prefix; extra tokens are cumulative relative to that prefix. Categories partition the evaluated stopped prefixes before rounding.}
\label{tab:app-forced-continuation}
\end{table*}

Table~\ref{tab:app-forced-continuation} shows that, for controller-ordered continuation, the beneficial share decreases from 18.0\% at $+1$ to 9.9\% at $+3$, whereas the harmful share increases from 16.9\% to 35.9\% as the cumulative added cost grows from 177 to 521 tokens. Retriever-ordered continuation is more harmful at every reported depth. The oracle-best diagnostic retains a larger beneficial share---28.2\% at $+1$---which confirms that some learned stops leave useful continuations, but its beneficial share also falls and its harmful share rises with depth. These are descriptive aggregate comparisons: without row-level paired intervals and a common-depth eligibility audit, they do not establish policy dominance or imply that every learned stop is optimal.

\begin{figure}[!t]
\centering
\includegraphics[width=.92\linewidth]{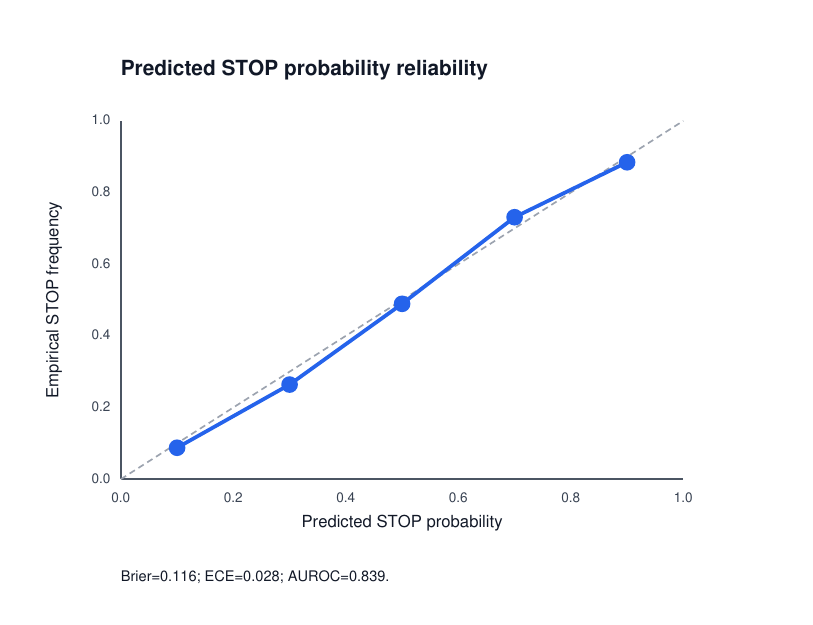}
\caption{Measured calibration of the temperature-scaled \stopa{} confidence against binary bounded-tree teacher decisions. Each point compares the mean predicted confidence with the teacher-label frequency in one of five bins; the dashed diagonal denotes perfect agreement. This is calibration to the bounded-tree teacher, not to exact global stopping optimality.}
\label{fig:app-stop-calibration}
\end{figure}

\paragraph{The stop confidence tracks the bounded-tree teacher, not an exact oracle.}
Figure~\ref{fig:app-stop-calibration} plots $\widehat p_{\mathrm{stop}}(s)=\sigma(\Delta_{\theta}^{\mathrm{stop}}(s)/T_s)$, where Eq.~\eqref{eq:stop-margin} defines the controller margin and Eq.~\eqref{eq:stop-loss} defines its temperature-scaled training target. The development-selected $T_s$ is frozen for test evaluation. We evaluate against the hard bounded-teacher decision $\mathbb{I}[\Delta_{\mathcal T_i}^{\mathrm{stop}}(s)\geq 0]$, consistent with resolving ties in favor of \stopa{}. The five mean confidence/teacher-frequency pairs are \PHStopCalibrationPairs{}. Within this binning, the Brier/ECE scores of \PHStopBrier{}/\PHStopECE{} indicate close aggregate agreement, while AUROC/AUPRC scores of \PHStopAUROC{}/\PHStopAUPRC{} indicate useful discrimination. These metrics do not estimate the probability that \stopa{} is globally optimal, and bin counts or uncertainty intervals are still needed to assess sparsely populated regions.

\paragraph{Residual stopping errors occur in both directions.}
The error audit records a stop as premature when the best available exact or bounded-teacher continuation value strictly exceeds the \stopa{} value, as late when the teacher preferred \stopa{} at an earlier prefix, and as forced when the controller reaches $K_{\max}$ without selecting \stopa{}. These are separate trajectory events and are not assumed to partition the diagnostic subset. Their respective rates are \PHPrematureStop{}, \PHLateStop{}, and \PHForcedKMax{}, and the associated mean utility loss relative to the audit's continuation benchmark is \PHStopRegret{}. The late-stop rate is numerically 1.9 percentage points higher than the premature-stop rate, but without paired uncertainty we treat this difference as descriptive rather than evidence of a systematic error asymmetry. Because this audit can use bounded-tree continuation values, its stopping-regret statistic should not be conflated with the reduced-space exact-oracle terminal-profile regret in Appendix~\ref{app:exact-oracle}. Overall, the diagnostics support \stopa{} as an informative, imperfect decision signal relative to its teacher; they do not establish globally minimal profiles.
\FloatBarrier

\subsection{Specificity, Matching, and Human Evaluation}
\label{app:specificity-audit}

We assess whether the selected profiles capture target-user behavior rather than merely providing useful task demonstrations from three complementary perspectives: a balance audit of the matched controls, frozen-generator interventions on profile ownership, action, and content, and blind human judgments of the resulting outputs.

\begin{table}[!t]
\centering
\small
\setlength{\tabcolsep}{2.0pt}
\begin{tabular}{lccccc}
\toprule
Audit & Coverage & Reject & Exact & \shortstack{Mean\\SMD} & \shortstack{p90\\SMD} \\
\midrule
\shortstack[l]{$K^-=3$;\\caliper enabled} & 87.5\% & 8.1\% & 31.1\% & 0.040 & 0.083 \\
\bottomrule
\end{tabular}
\caption{Measured balance audit for content-matched replacements. Coverage is the share of selected records with three valid controls; Reject is the share excluded by the caliper.}
\label{tab:app-matching}
\end{table}

\paragraph{Matching quality bounds the specificity analysis.}
Table~\ref{tab:app-matching} reports both the feasibility of constructing matched controls and their balance on variables observed before the user action. Coverage is the fraction of selected records for which all $K^-=3$ other-user controls satisfy the caliper; candidates that fail this requirement are rejected rather than silently replaced. Exact-match rate measures agreement within discrete matching strata, whereas standardized mean differences summarize balance over continuous covariates. Because the subsequent replacement comparisons exclude unmatched records, they characterize the matched subset rather than all selected profiles.

\paragraph{Interventions distinguish personal from generic utility.}
Table~\ref{tab:specificity-interventions} compares the target-user profile with five controls. Content-matched replacement approximately preserves observable content and query relevance while replacing a target-user record with a record from another user; random-user replacement and whole-profile swaps provide progressively less constrained controls. Action shuffling breaks the pairing between a historical context and the action taken in that context. Finally, owner permutation keeps record contents fixed while reassigning ownership labels, serving as a negative-control audit of whether the matching pipeline itself can manufacture an apparent specificity effect. These comparisons estimate frozen-generator replacement effects conditional on the achieved covariate balance; they do not identify a causal effect of user identity.

\begin{table}[!t]
\centering
\small
\setlength{\tabcolsep}{2.5pt}
\begin{tabular}{llrrrr}
\toprule
Task & Criterion & Win & Tie & Loss & $\kappa$ \\
\midrule
LaMP-4 & task faithfulness & 56.0\% & 24.0\% & 20.0\% & 0.41 \\
LaMP-4 & user style consistency & 61.3\% & 21.0\% & 17.7\% & 0.47 \\
LaMP-4 & overall preference & 58.7\% & 23.0\% & 18.3\% & 0.44 \\
\addlinespace
LaMP-7 & task faithfulness & 53.7\% & 25.3\% & 21.0\% & 0.41 \\
LaMP-7 & user style consistency & 60.3\% & 22.0\% & 17.7\% & 0.44 \\
LaMP-7 & overall preference & 58.7\% & 24.0\% & 17.3\% & 0.42 \\
\bottomrule
\end{tabular}
\caption{Measured three-annotator blind comparison of \method{} against PURPLE on 200 samples per task. Win, tie, and loss are defined from the perspective of \method{}; $\kappa$ denotes Fleiss' agreement coefficient.}
\label{tab:app-human}
\end{table}

\paragraph{Blind human judgments test whether the differences are perceptible.}
For each of LaMP-4 and LaMP-7, Table~\ref{tab:app-human} compares \method{} with PURPLE on 200 examples, with three annotators independently judging each example. Task faithfulness, consistency with the user's style, and overall preference are elicited as separate questions so that task correctness is not conflated with stylistic personalization. Presentation order is randomized and method identity is concealed. Annotators see the current input and the legally available user history, but not likelihood scores; automatic references are also hidden for the overall-preference judgment. Ties are retained as valid outcomes, and adjudication is restricted to protocol violations. The table reports win/tie/loss rates and Fleiss' $\kappa$ from the completed annotation study.
\FloatBarrier

\subsection{Additional Ablations}
\label{app:ablations-oracle}

The main ablation figure isolates long-horizon targets, adaptive stopping, user specificity, and token cost. We additionally test the value-head decomposition, ranking and STOP losses, and order-sensitive profile encoding.

\begin{table}[!t]
\centering
\small
\setlength{\tabcolsep}{3.0pt}
\begin{tabular}{lrrrr}
\toprule
Variant & Quality & Tokens & Harm & Regret \\
\midrule
Full \method{} & 99.9 & 512 & 15.0\% & 0.070 \\
Single net-value head & 99.4 & 515 & 17.1\% & 0.096 \\
No ranking loss & 99.4 & 516 & 16.7\% & 0.090 \\
No STOP loss & 99.0 & 581 & 18.7\% & 0.121 \\
Permutation-invariant profile & 99.2 & 513 & 17.5\% & 0.112 \\
\bottomrule
\end{tabular}
\caption{Measured controller-objective and representation ablations, averaged over three complete pipeline seeds. Higher Quality and lower token cost, harm, and regret are better.}
\label{tab:app-extra-ablation}
\end{table}

\FloatBarrier

\subsection{Exact Minimal-Sufficient Oracle}
\label{app:exact-oracle}

\begin{table}[!t]
\centering
\small
\setlength{\tabcolsep}{2.6pt}
\begin{tabular}{lrrrr}
\toprule
Selector & Regret & $\epsilon$-suff. & Excess tok. & Stop agree \\
\midrule
Exact oracle & 0.000 & 100.0\% & 0 & 100.0\% \\
Bounded-tree teacher & 0.028 & 91.6\% & 41 & 92.3\% \\
Learned \method{} & 0.069 & 81.8\% & 89 & 83.7\% \\
Myopic & 0.129 & 68.5\% & 169 & 71.9\% \\
Fixed $k=4$ & 0.152 & 64.1\% & 248 & 65.2\% \\
\bottomrule
\end{tabular}
\caption{Measured exact-oracle diagnostic pooled over 50 examples per task with $M'=6$ and $K'=4$. Lower is better for regret and excess tokens; higher is better for $\epsilon$-sufficiency and stop agreement.}
\label{tab:oracle}
\end{table}

\begin{table}[!t]
\centering
\small
\begin{tabular}{lrrrr}
\toprule
Task & Teacher & \method{} & Myopic & Fixed-$4$ \\
\midrule
LaMP-1 & 0.025 & 0.064 & 0.122 & 0.147 \\
LaMP-2 & 0.030 & 0.073 & 0.130 & 0.154 \\
LaMP-3 & 0.042 & 0.082 & 0.141 & 0.162 \\
LaMP-4 & 0.024 & 0.066 & 0.126 & 0.148 \\
LaMP-5 & 0.037 & 0.080 & 0.137 & 0.161 \\
LaMP-7 & 0.023 & 0.065 & 0.121 & 0.144 \\
\bottomrule
\end{tabular}
\caption{Measured utility regret to the exact oracle by task; lower is better, and the exact oracle has zero regret by definition.}
\label{tab:app-oracle-task}
\end{table}

We measure how closely each selector approaches a minimal sufficient profile using an exact-oracle diagnostic on 50 examples per task. For each instance, we reduce the candidate pool to $M'=6$, cap the profile length at $K'=4$, and enumerate all 517 legal ordered profiles: the empty profile and every length-one-to-four permutation drawn without replacement. All methods use the same frozen generator and exact token cost. Because the exact oracle scores profiles using the gold reference, it provides a diagnostic upper bound rather than a deployable selector.

The bounded-tree teacher retains most of the oracle performance, with 0.028 utility regret, 91.6\% $\epsilon$-sufficiency, 41 excess tokens, and 92.3\% agreement with the oracle stop decision (Table~\ref{tab:oracle}). The learned \method{} controller obtains 0.069 regret and reaches $\epsilon$-sufficiency on 81.8\% of instances while using 89 excess tokens and matching the oracle stop decision 83.7\% of the time. It is substantially closer to the oracle than the myopic and fixed-$k$ selectors on all four diagnostics. The gap between the exact oracle and bounded teacher quantifies bounded-search truncation, whereas the additional gap from the teacher to \method{} reflects controller approximation.

The task-level results show the same ordering: \method{} regret ranges from 0.064 on LaMP-1 to 0.082 on LaMP-3, below both the myopic and fixed-$4$ alternatives on every task (Table~\ref{tab:app-oracle-task}). Pooled quantities are computed directly over diagnostic instances rather than by averaging heterogeneous task-level metrics.
\FloatBarrier

\subsection{Additional Hyperparameter Sensitivity}
\label{app:additional-hyperparameter-sensitivity}

To assess whether \method{} depends on a narrow configuration, we conduct additional one-at-a-time sweeps on the development set over the candidate-pool size $M$, maximum profile depth $K_{\max}$, number of matched controls $K^{-}$, offline search budget $(J,W)$, and auxiliary-loss weights $\eta$ and $\zeta$. All remaining settings are held fixed at their selected values. For each task, we orient its official development metric so that higher is better and normalize it to a common 100-point reference level; the reported quality index is the macro-average of the six task-level indices rather than an average of raw heterogeneous metrics. Stop regret is the mean utility shortfall of the controller-selected stopping point relative to the best feasible stopping value on the exact-oracle diagnostic subset in Appendix~\ref{app:exact-oracle}. For the structural and search parameters, we report both metrics; for the loss weights, we report only aggregate quality. Following the protocol above, each plotted point aggregates three complete pipeline seeds. Because the figures do not show seed-level intervals, we interpret the trends descriptively within the evaluated grids and do not use these sweeps to claim statistical significance. All operating points are selected using development data and frozen before test evaluation.

\begin{figure*}[!t]
\centering
\includegraphics[width=.98\textwidth]{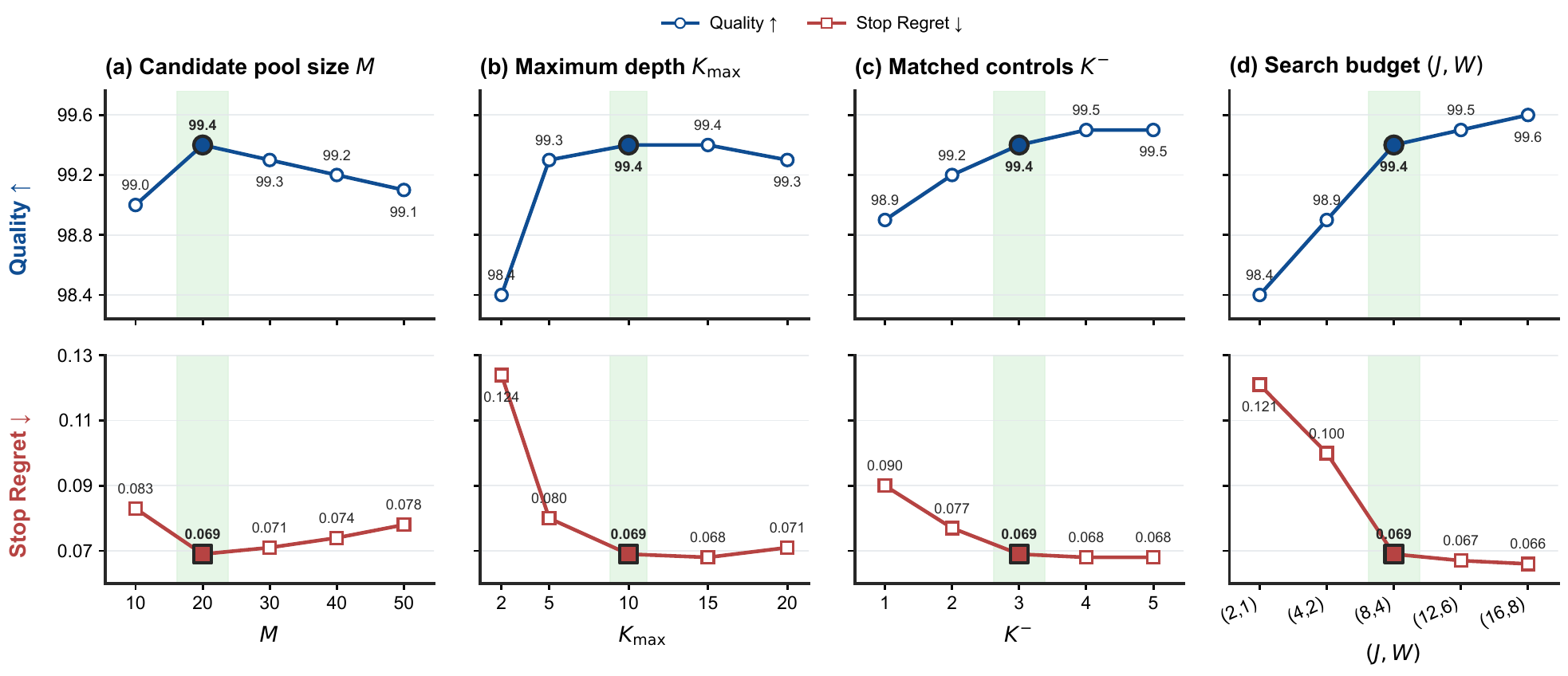}
\caption{Measured development-set sensitivity of \method{} to candidate-pool size $M$, maximum profile depth $K_{\max}$, number of matched controls $K^{-}$, and offline search budget $(J,W)$. The upper row reports the aggregate quality index (higher is better), and the lower row reports stop regret (lower is better). Shaded bands and black-outlined markers denote the selected settings $M=20$, $K_{\max}=10$, $K^{-}=3$, and $(J,W)=(8,4)$.}
\label{fig:app-structural-hyperparameters}
\end{figure*}

\paragraph{Structural and search parameters.}
Figure~\ref{fig:app-structural-hyperparameters} shows that the selected settings lie near favorable regions of their respective sweeps. Among the evaluated pool sizes, $M=20$ attains both the highest quality value, 99.4, and the lowest stop regret, 0.069; enlarging the pool further produces small degradations in both metrics. Increasing $K_{\max}$ from 2 to 10 substantially improves quality and stop regret, whereas $K_{\max}=15$ preserves the same quality of 99.4 and lowers regret by only 0.001. Similarly, $K^{-}=3$ captures most of the improvement from additional matched controls: moving to four or five controls changes quality by at most 0.1 and stop regret by at most 0.001. Larger search budgets continue to improve both metrics, but the gains diminish beyond $(J,W)=(8,4)$: increasing the budget to $(16,8)$ raises quality from 99.4 to 99.6 and reduces stop regret from 0.069 to 0.066. These trends motivate moderate operating points that retain strong development-set performance while bounding candidate processing, matched-control scoring, and offline search.

\begin{figure}[!t]
\centering
\includegraphics[width=\columnwidth]{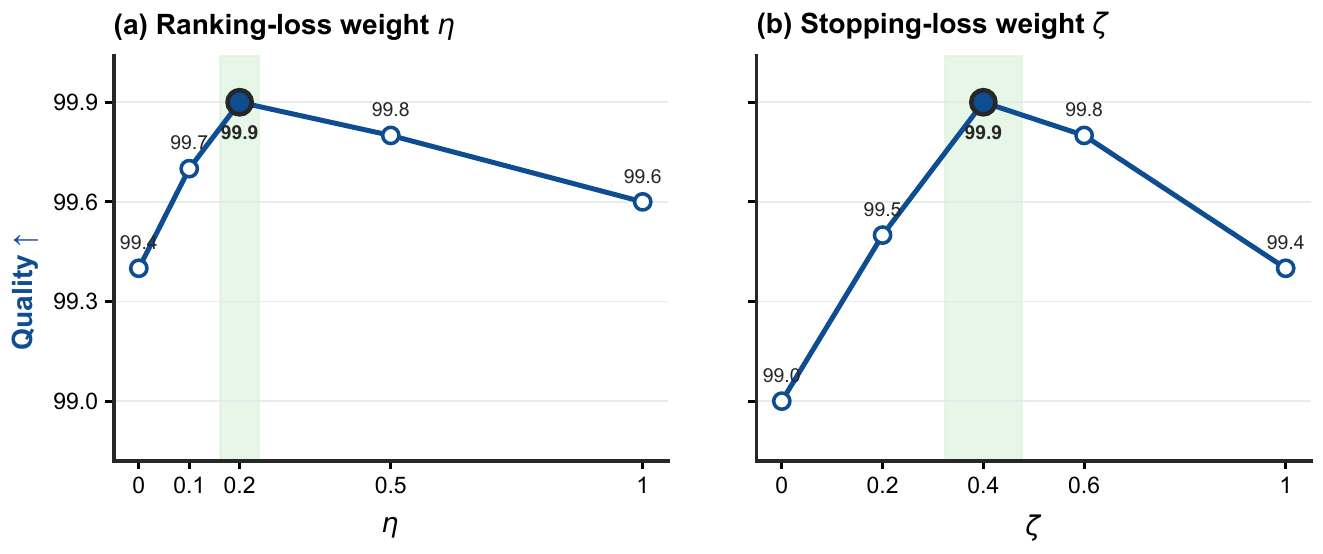}
\caption{Measured development-set quality sensitivity to the ranking-loss weight $\eta$ and stopping-loss weight $\zeta$ in Eq.~\eqref{eq:total-loss}. Each weight is varied within $[0,1]$ while the remaining configuration is held fixed. Higher is better; shaded bands and black-outlined markers denote the selected values $\eta=0.2$ and $\zeta=0.4$.}
\label{fig:app-loss-weight-sensitivity}
\end{figure}

\paragraph{Training-loss weights.}
Both loss-weight sweeps favor moderate, nonzero coefficients within the evaluated grids. The selected value $\eta=0.2$ reaches a quality index of 99.9, compared with 99.4 when the ranking loss is omitted and 99.6 at $\eta=1$. Likewise, $\zeta=0.4$ reaches 99.9, compared with 99.0 at $\zeta=0$ and 99.4 at $\zeta=1$. The neighboring settings remain competitive---quality is 99.8 at both $\eta=0.5$ and $\zeta=0.6$---so the results indicate a favorable region rather than a sharply isolated optimum. The sweeps therefore support balancing the ranking and stopping objectives with value regression. Because Figure~\ref{fig:app-loss-weight-sensitivity} reports only aggregate quality, it does not by itself establish how either coefficient affects stopping calibration, profile length, or task-level variability.
\FloatBarrier

\subsection{Robustness and Subgroups}
\label{app:robustness-efficiency}

\paragraph{Controlled stress tests.}
Starting from the one-useful-record condition, a near duplicate changes selected length by \PHRobustDuplicateLengthDelta{} records. The action-conflicting condition changes harm from \PHRobustUsefulHarm{} to \PHRobustConflictHarm{} and quality by \PHRobustConflictQualityDrop{} index points (Table~\ref{tab:app-robustness}). The subgroup analysis stratifies history length, match coverage, and input difficulty.

\begin{table}[!t]
\centering
\small
\setlength{\tabcolsep}{2.0pt}
\begin{tabular}{lrrrrr}
\toprule
Candidate ladder & Quality & Tokens & Records & \shortstack{Root\\\stopa{}} & Harm \\
\midrule
No useful history & 96.3 & 0 & 0.00 & 100.0\% & 0.0\% \\
+ useful record & 99.3 & 220 & 0.99 & 18.2\% & 12.0\% \\
+ near duplicate & 99.3 & 233 & 1.06 & 17.7\% & 11.9\% \\
+ irrelevant record & 99.2 & 237 & 1.08 & 17.0\% & 13.1\% \\
+ action conflict & 98.5 & 266 & 1.25 & 15.0\% & 19.0\% \\
\bottomrule
\end{tabular}
\caption{Measured controlled candidate-injection stress test. Each row adds one perturbation to the preceding candidate-pool condition.}
\label{tab:app-robustness}
\end{table}

\begin{table}[!t]
\centering
\small
\setlength{\tabcolsep}{3.0pt}
\begin{tabular}{llrrrr}
\toprule
Dimension & Bucket & Quality & Tokens & Root \stopa{} & Harm \\
\midrule
History & short & 98.9 & 302 & 26.6\% & 13.7\% \\
History & medium & 99.5 & 506 & 16.8\% & 15.2\% \\
History & long & 99.6 & 727 & 8.3\% & 18.0\% \\
\addlinespace
Coverage & low & 98.5 & 354 & 30.1\% & 18.2\% \\
Coverage & medium & 99.3 & 501 & 16.9\% & 15.4\% \\
Coverage & high & 99.7 & 656 & 8.1\% & 13.8\% \\
\addlinespace
Difficulty & easy & 99.5 & 425 & 20.2\% & 11.2\% \\
Difficulty & medium & 99.3 & 514 & 16.6\% & 14.9\% \\
Difficulty & hard & 98.8 & 611 & 13.1\% & 22.0\% \\
\bottomrule
\end{tabular}
\caption{Measured subgroup analysis by user-history length, matched-candidate coverage, and input difficulty. Buckets are defined by development-set quantiles and frozen for test evaluation.}
\label{tab:app-subgroups}
\end{table}

The robustness ladder inserts one perturbation at a time into an otherwise controlled pool. Near duplicates test redundancy, random records test distraction, and action conflicts test whether the controller can reject semantically relevant but behaviorally inconsistent evidence. Subgroups are defined on development-set quantiles and frozen before test evaluation.
\FloatBarrier

\begin{figure}[!t]
\centering
\includegraphics[width=0.75\columnwidth]{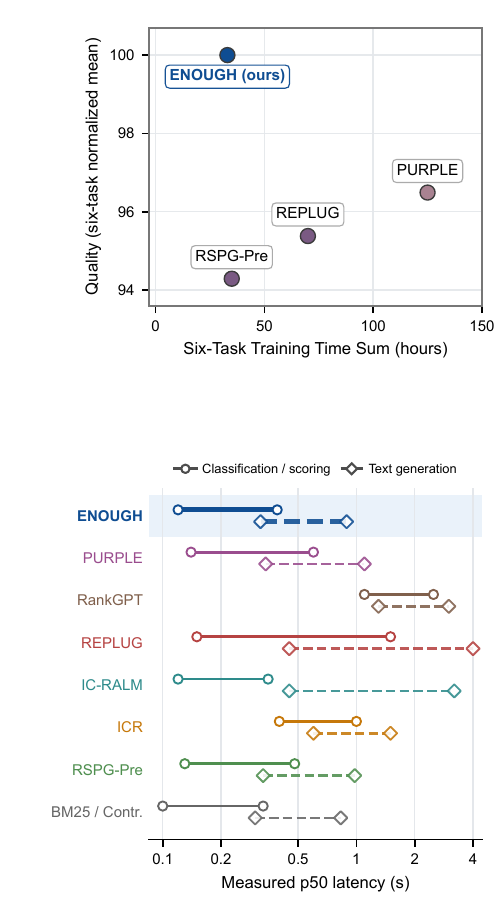}
\caption{Detailed training-time and latency comparison. Top: Quality versus the measured six-task training-time sum on one A800 for the four methods that require task-specific GPU training; Quality is the equal-weight mean of six task-normalized scores. Bottom: measured batch-size-one p50 latency ranges across classification/scoring tasks (solid lines with circles) and text-generation tasks (dashed lines with diamonds); BM25 and Contriever are grouped in this panel.}
\label{fig:app-quality-training-latency}
\end{figure}

\subsection{Online and Offline Efficiency}
\label{app:efficiency}

\begin{table}[!t]
\centering
\small
\setlength{\tabcolsep}{3.0pt}
\begin{tabular}{lrrrrr}
\toprule
Method & Retrieval & Controller & Selector & Generator & Other \\
\midrule
Contriever-$5$ & 17 & -- & 1 & 362 & 15 \\
PURPLE-$5$ & 18 & -- & 10 & 357 & 16 \\
\method{} & 17 & 11 & -- & 325 & 17 \\
\bottomrule
\end{tabular}
\caption{Measured p50 end-to-end latency decomposition in milliseconds under the common batch-size-one evaluation protocol.}
\label{tab:app-latency-components}
\end{table}

\begin{table}[!t]
\centering
\small
\setlength{\tabcolsep}{2.0pt}
\begin{tabular}{ccccc}
\toprule
\shortstack{Teacher\\calls (M)} & \shortstack{Cache\\hit} & \shortstack{GPU-\\hours} & \shortstack{Storage\\(GB)} & \shortstack{Extra GPU-h\\per $\lambda$} \\
\midrule
9.7 & 74.8\% & 336 & 95 & 95 \\
\bottomrule
\end{tabular}
\caption{Measured offline teacher-scoring and cache cost for one operating point. Teacher calls are reported in millions.}
\label{tab:app-offline-cost}
\end{table}

All online timings use the same single-GPU machine, batch size one, warmed caches, and identical generator settings. We report median and p95 end-to-end latency and decompose the median into retrieval, controller or selector, generation, and residual preprocessing. Multi-call baselines include every reranking or generation call. Offline teacher scoring, cache construction, and additional $\lambda$ operating points are excluded from online latency but reported explicitly in Table~\ref{tab:app-offline-cost}.

\paragraph{Training and task-family detail.}
Figure~\ref{fig:app-quality-training-latency} separates offline training time from task-dependent online latency. \method{} reaches Quality 100.00 with a six-task training sum of 33 hours, close to RSPG-Pre at 35 hours and below REPLUG and PURPLE at 70 and 125 hours. In the measured latency comparison, \method{} remains in the low-latency group for both task families; BM25/Contriever have slightly lower ranges, whereas RankGPT and the generation paths of REPLUG and IC-RALM extend to substantially higher upper bounds. Because the two panels measure different cost dimensions, we do not collapse them into a single efficiency score.

\FloatBarrier

\section{Complete Generator Prompt Templates}
\label{app:prompt-templates}

Figure~\ref{fig:prompt-templates} shows the complete logical system and user messages supplied to the frozen generator before model-specific chat serialization. The gray system instruction and the \texttt{<TASK>}, \texttt{<USER\_HISTORY>}, and \texttt{<CURRENT\_REQUEST>} delimiters are shared by every method. Within \texttt{<CURRENT\_REQUEST>}, we retain the official LaMP task instruction verbatim and replace only instance-specific content with descriptive braced fields. The gold block uses our uniform behavioral-record serialization; records are repeated in the order chosen by the selector. If the controller selects \stopa{} at the root, the two \texttt{<USER\_HISTORY>} delimiters remain but enclose no record. Display line breaks inside the blue block are visual wrapping only. The official Qwen3.5-9B and Llama-3.1-8B-Instruct chat templates subsequently add their model-specific control tokens.

For LaMP-1--5, the historical context and user action correspond to the task fields reported in each figure. Public LaMP-7 histories contain only the user's prior tweet and no separate pre-action context; accordingly, its \texttt{[HIST\_CONTEXT]} field is empty and the tweet is serialized as \texttt{[USER\_ACTION]}. This is a property of the released benchmark schema rather than a task-specific prompt exception.

\begin{figure*}[p]
\centering
\includegraphics[width=\textwidth]{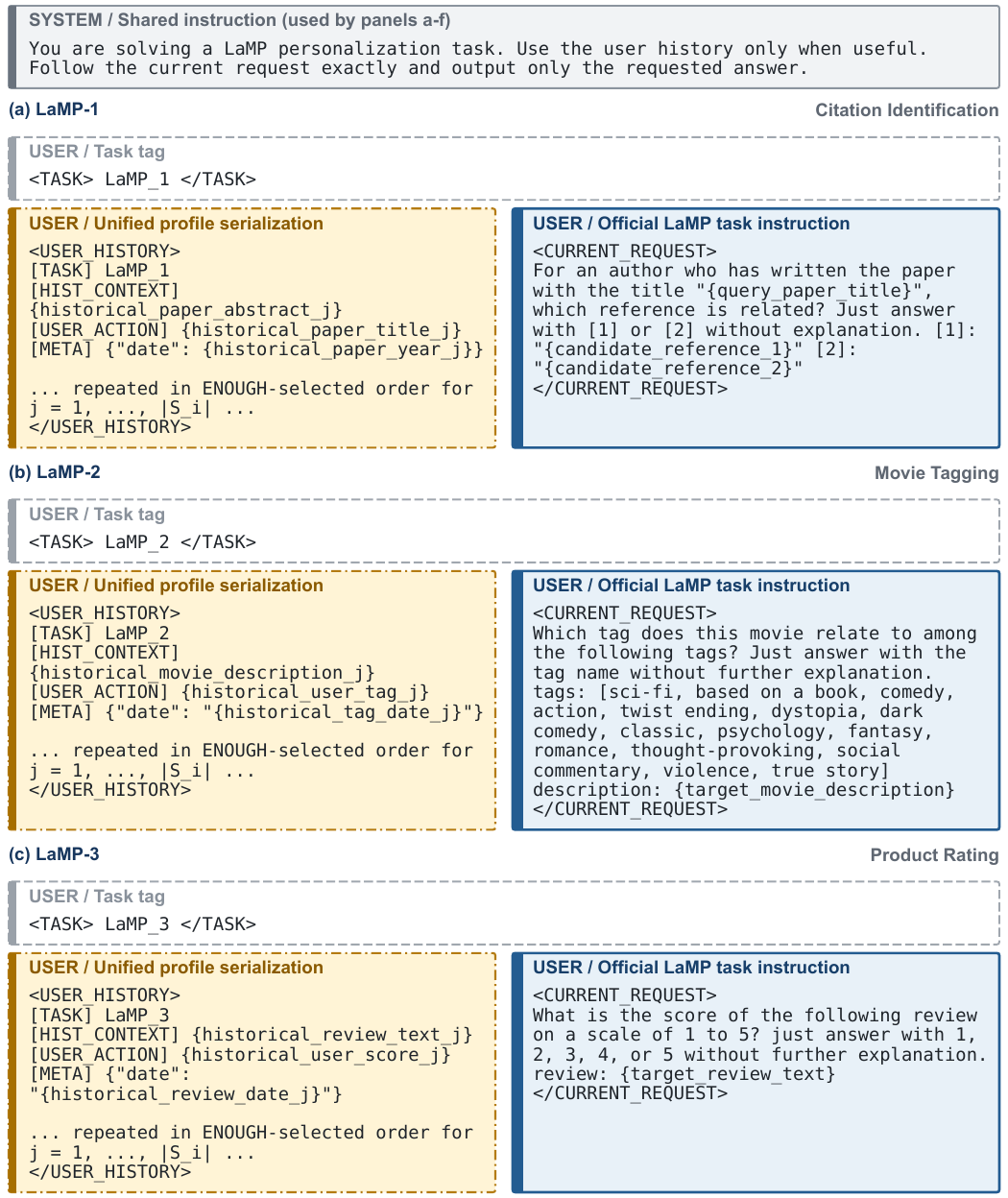}
\end{figure*}
\clearpage

\begin{figure*}[p]
\centering
\includegraphics[width=\textwidth]{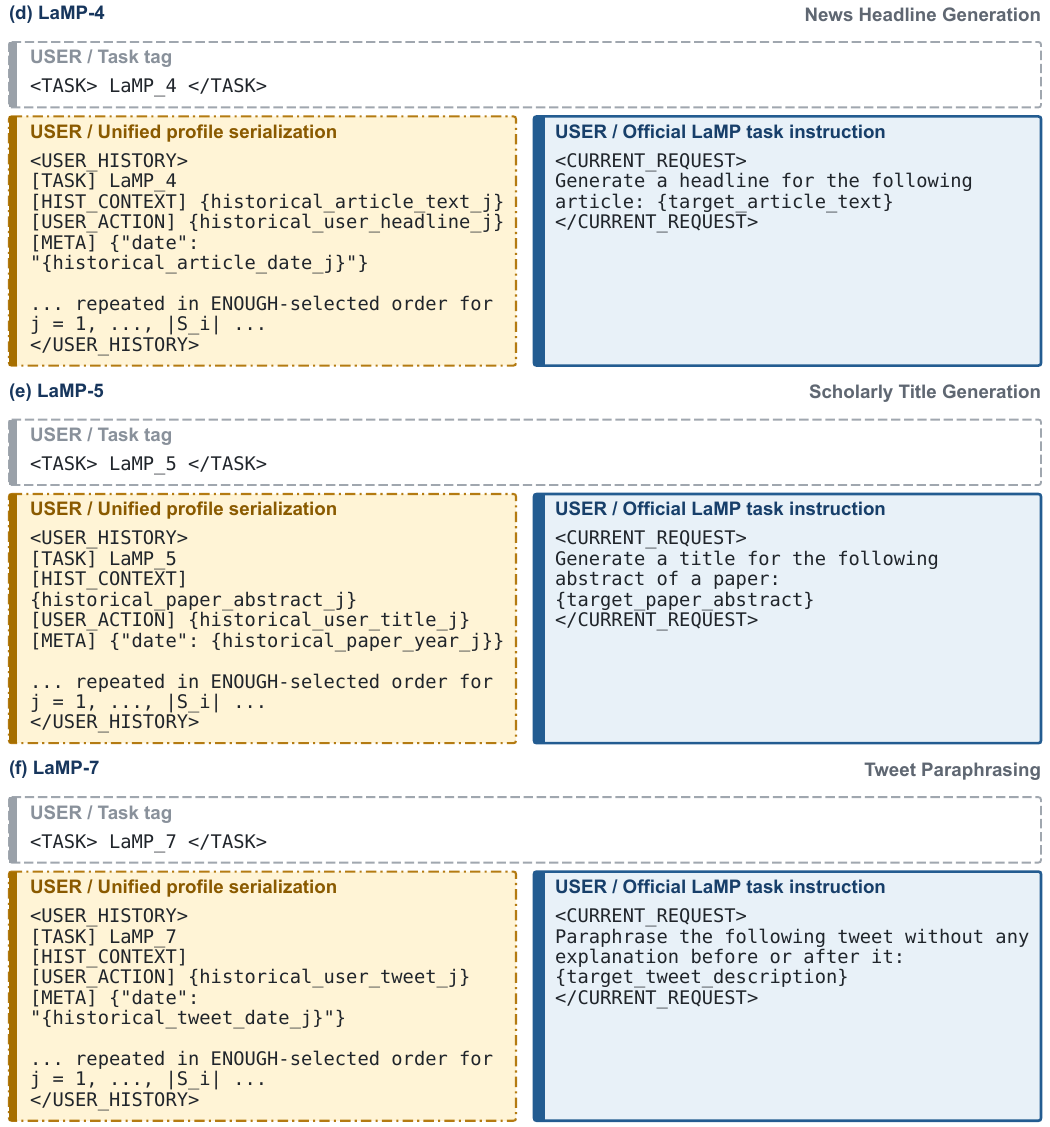}
\caption{Complete logical generator-prompt templates for (a) LaMP-1 personalized citation identification, (b) LaMP-2 personalized movie tagging, (c) LaMP-3 personalized product rating, (d) LaMP-4 personalized news headline generation, (e) LaMP-5 personalized scholarly title generation, and (f) LaMP-7 personalized tweet paraphrasing. The system instruction shown on the first page is shared by all six templates.}
\label{fig:prompt-templates}
\end{figure*}
\FloatBarrier

\end{document}